\documentclass{acm_proc_article-sp}
\usepackage{times}
\usepackage{graphicx}
\usepackage{subfigure} 
\usepackage{algorithm}
\usepackage{algorithmic}
\usepackage{natbib}
\usepackage{hyperref}

\usepackage{booktabs}
\usepackage{url}
\usepackage{color}
\usepackage{psfrag}
\usepackage{amsmath,amsfonts,amssymb}

% Miguel A. Carreira-Perpinan's LaTeX macros.
% Time-stamp: <10/04/15 13:33:09 mcarreira-perpinan>

% Definitions that require only standard LaTeX2e things

% Letters used for matrices and vectors (boldface), for functions (roman), etc.
\newcommand{\A}{\ensuremath{\mathbf{A}}}
\newcommand{\B}{\ensuremath{\mathbf{B}}}

\newcommand{\E}{\ensuremath{\mathbf{E}}}

\newcommand{\M}{\ensuremath{\mathbf{M}}}

\newcommand{\PP}{\ensuremath{\mathbf{P}}}

\renewcommand{\SS}{\ensuremath{\mathbf{S}}}

\newcommand{\U}{\ensuremath{\mathbf{U}}}

\newcommand{\X}{\ensuremath{\mathbf{X}}}
\newcommand{\Y}{\ensuremath{\mathbf{Y}}}

  % TIPA defines \s and LaTeX \ss!

%\renewcommand{\v}{\ensuremath{\mathbf{v}}}

% Bold symbols and greek letters

% Blackboard bold

\newcommand{\bbE}{\ensuremath{\mathbb{E}}}

\newcommand{\bbR}{\ensuremath{\mathbb{R}}}

% Calligraphic

\newcommand{\calX}{\ensuremath{\mathcal{X}}}

% Relational operators
% \bydef puts ``def'' over the equals sign and means ``is by definition
% equal to''. Another possibility is to use the \triangleq symbol.

% \simbydef puts ``def'' over the ~ sign and means ``is by definition
% distributed as''.

% Other functions
\newcommand{\abs}[1]{\left\lvert#1\right\rvert}
\newcommand{\norm}[1]{\left\lVert#1\right\rVert}

% Left superscript
% (from http://www.maths.univ-rennes1.fr/~edix/sgahtml/typesetting_rules.html)

% Box of text (for pictures, tables, etc.), without frames. Arguments:
%   #1: separation between rows of text as a multiple of LaTeX's default
%       (optional, default 1).
%   #2: position (one of t, b, c).
%   #3: format (one of l, r, c or even p{2cm}).
%   #4: text, which can include line breaks (\\).

% The amsart class has its own \keywords command, in which case we
% don't replace it.

% The logos (AmSLaTeX, BibTeX, etc.) are defined in texnames.sty.

% List environments
%
% "simplelist": useful for compact lists. Use it instead of "itemize".
% Optional argument: the bullet (default \triangleright). Other nice
% bullets are available in the Zapf Dingbats font (e.g. \ding{166} in
% the pifont package).
%
%
{%
\begin{list}{#1}{
\vspace{-\topsep}
\vspace{-\partopsep}
\setlength{\itemindent}{0cm}
\setlength{\rightmargin}{0cm}
\setlength{\listparindent}{0cm}
\settowidth{\labelwidth}{#1}
\setlength{\leftmargin}{\labelwidth}
\addtolength{\leftmargin}{\labelsep}
\setlength{\itemsep}{0cm}
}%
}%
{%
\end{list}
\vspace{-\topsep}
\vspace{-\partopsep}
}

%
% enumthm: an enumerated list intended to appear in theorems and not
% to be nested. Each item is numbered in Roman numerals in parentheses.
% A suitable way to refer to an item is:
% Theorem~\ref{thm:pithagoras}(\ref{en:triangle}) states...
%
{\begin{enumerate}%
}%
{\end{enumerate}}

% Commands for the hyperref package (they also require the url package).
% In all cases the typesetting style is tt by default but can be changed
% with the \urlstyle command, e.g. \urlstyle{tt}.
%
% \MACPhref[text]{link} typesets text (default = link) in tt and
% hyperlinks to link, e.g. \MACPhref{http://www.dcs.shef.ac.uk/~miguel}.

%
% \MACPmailto[text]{email} typesets text (default = email) in tt and
% hyperlinks to email as mailto:email, e.g. \MACPmailto{miguel@dcs.shef.ac.uk}.

% Hyphenation
\hyphenation{elec-tro-pa-la-tog-ra-phy}

% From the amsbook.cls file version 2.04
\hyphenation{acad-e-my acad-e-mies af-ter-thought anom-aly anom-alies
an-ti-deriv-a-tive an-tin-o-my an-tin-o-mies apoth-e-o-ses
apoth-e-o-sis ap-pen-dix ar-che-typ-al as-sign-a-ble as-sist-ant-ship
as-ymp-tot-ic asyn-chro-nous at-trib-uted at-trib-ut-able bank-rupt
bank-rupt-cy bi-dif-fer-en-tial blue-print busier busiest
cat-a-stroph-ic cat-a-stroph-i-cally con-gress cross-hatched data-base
de-fin-i-tive de-riv-a-tive dis-trib-ute dri-ver dri-vers eco-nom-ics
econ-o-mist elit-ist equi-vari-ant ex-quis-ite ex-tra-or-di-nary
flow-chart for-mi-da-ble forth-right friv-o-lous ge-o-des-ic
ge-o-det-ic geo-met-ric griev-ance griev-ous griev-ous-ly
hexa-dec-i-mal ho-lo-no-my ho-mo-thetic ideals idio-syn-crasy
in-fin-ite-ly in-fin-i-tes-i-mal ir-rev-o-ca-ble key-stroke
lam-en-ta-ble light-weight mal-a-prop-ism man-u-script mar-gin-al
meta-bol-ic me-tab-o-lism meta-lan-guage me-trop-o-lis
met-ro-pol-i-tan mi-nut-est mol-e-cule mono-chrome mono-pole
mo-nop-oly mono-spline mo-not-o-nous mul-ti-fac-eted mul-ti-plic-able
non-euclid-ean non-iso-mor-phic non-smooth par-a-digm par-a-bol-ic
pa-rab-o-loid pa-ram-e-trize para-mount pen-ta-gon phe-nom-e-non
post-script pre-am-ble pro-ce-dur-al pro-hib-i-tive pro-hib-i-tive-ly
pseu-do-dif-fer-en-tial pseu-do-fi-nite pseu-do-nym qua-drat-ic
quad-ra-ture qua-si-smooth qua-si-sta-tion-ary qua-si-tri-an-gu-lar
quin-tes-sence quin-tes-sen-tial re-arrange-ment rec-tan-gle
ret-ri-bu-tion retro-fit retro-fit-ted right-eous right-eous-ness
ro-bot ro-bot-ics sched-ul-ing se-mes-ter semi-def-i-nite
semi-ho-mo-thet-ic set-up se-vere-ly side-step sov-er-eign spe-cious
spher-oid spher-oid-al star-tling star-tling-ly sta-tis-tics
sto-chas-tic straight-est strange-ness strat-a-gem strong-hold
sum-ma-ble symp-to-matic syn-chro-nous topo-graph-i-cal tra-vers-a-ble
tra-ver-sal tra-ver-sals treach-ery turn-around un-at-tached
un-err-ing-ly white-space wide-spread wing-spread wretch-ed
wretch-ed-ly Eng-lish Euler-ian Feb-ru-ary Gauss-ian
Hamil-ton-ian Her-mit-ian Jan-u-ary Japan-ese Kor-te-weg
Le-gendre Mar-kov-ian Noe-ther-ian No-vem-ber Rie-mann-ian Sep-tem-ber}

%%% Local Variables: 
%%% mode: latex
%%% TeX-master: t
%%% End: 

% Definitions that require the amsmath.sty and amsthm.sty files
% Time-stamp: <07/01/15 18:05:06 miguel>

% -------------- Operators --------------

% Probability

% Functions

% Information theory

%\newcommand{\KLop}{\operatorname{KL}}
 % Kullback-Leibler divergence
   % Symmetrised KL divergence

% Set theory
% Are "card" and "im" the correct names in English?

% Matrices

           % Cofactor of a matrix element
           % Adjunt matrix
\newcommand{\traceop}{\operatorname{tr}}
\newcommand{\trace}[1]{\ensuremath{\traceop\left(#1\right)}}

%%% Local Variables: 
%%% mode: plain-tex
%%% TeX-master: t
%%% End: 

\newcommand{\bbI}{\ensuremath{\mathbb{I}}}
\newcommand{\ind}[1]{\ensuremath{\bbI\left(#1\right)}}
\newcommand{\bOmega}{\mathbf{\Omega}}
\newcommand{\Xij}{\X_{ij}}
\newcommand{\tXij}{\tilde{\X}_{ij}}
\newcommand{\tX}{\tilde{\X}}
\newcommand{\Eij}{\E_{ij}}
\newcommand{\Oij}{\bOmega_{ij}}
\newcommand{\condP}[2]{\mathbb{P}\left( #1 \middle\vert #2 \right)}
\newcommand{\ExpO}[1]{\bbE_{\bOmega}\left[ #1 \right]}
\newcommand{\ExpX}[1]{\bbE_{\tX}\left[ #1 \right]}

\newtheorem{theorem}{Theorem}

\newtheorem{lemma}{Lemma}

\begin{document}

\title{Network Inference by Learned Node-Specific Degree Prior}
%
% You need the command \numberofauthors to handle the 'placement
% and alignment' of the authors beneath the title.
%
% For aesthetic reasons, we recommend 'three authors at a time'
% i.e. three 'name/affiliation blocks' be placed beneath the title.
%
% NOTE: You are NOT restricted in how many 'rows' of
% "name/affiliations" may appear. We just ask that you restrict
% the number of 'columns' to three.
%
% Because of the available 'opening page real-estate'
% we ask you to refrain from putting more than six authors
% (two rows with three columns) beneath the article title.
% More than six makes the first-page appear very cluttered indeed.
%
% Use the \alignauthor commands to handle the names
% and affiliations for an 'aesthetic maximum' of six authors.
% Add names, affiliations, addresses for
% the seventh etc. author(s) as the argument for the
% \additionalauthors command.
% These 'additional authors' will be output/set for you
% without further effort on your part as the last section in
% the body of your article BEFORE References or any Appendices.

\numberofauthors{4} %  in this sample file, there are a *total*
% of EIGHT authors. SIX appear on the 'first-page' (for formatting
% reasons) and the remaining two appear in the \additionalauthors section.
%
\author{
% You can go ahead and credit any number of authors here,
% e.g. one 'row of three' or two rows (consisting of one row of three
% and a second row of one, two or three).
%
% The command \alignauthor (no curly braces needed) should
% precede each author name, affiliation/snail-mail address and
% e-mail address. Additionally, tag each line of
% affiliation/address with \affaddr, and tag the
% e-mail address with \email.
%
% 1st. author
\alignauthor
Qingming Tang \footnotemark[1]\\
       \affaddr{Toyota Technological Institute at Chicago}\\
       \affaddr{6045 S. Kenwood Ave.}\\
       \affaddr{Chicago, Illinois 60637}\\
       \email{qmtang@ttic.edu}
% 2nd. author
\alignauthor
Lifu Tu \footnotemark[1]\\
       \affaddr{Toyota Technological Institute at Chicago}\\
       \affaddr{6045 S. Kenwood Ave.}\\
       \affaddr{Chicago, IL}\\
       \email{lifu@ttic.edu}
\and  % use '\and' if you need 'another row' of author names
% 4th. author
\alignauthor
Weiran Wang \footnotemark[1] \\
       \affaddr{Toyota Technological Institute at Chicago}\\
       \affaddr{6045 S. Kenwood Ave.}\\
       \affaddr{Chicago, IL}\\
       \email{weiranwang@ttic.edu }
% 5th. author
\alignauthor Jinbo Xu \\
       \affaddr{Toyota Technological Institute at Chicago}\\
       \affaddr{6045 S. Kenwood Ave.}\\
       \affaddr{Chicago, Illinois 60637}\\
       \email{jinbo.xu@gmail.com}
}

% There's nothing stopping you putting the seventh, eighth, etc.
% author on the opening page (as the 'third row') but we ask,
% for aesthetic reasons that you place these 'additional authors'
% in the \additional authors block, viz.
% Just remember to make sure that the TOTAL number of authors
% is the number that will appear on the first page PLUS the
% number that will appear in the \additionalauthors section.

\maketitle

\begin{abstract}
We propose a novel method for network inference from partially observed edges using a node-specific degree prior. The degree prior is derived from observed edges in the network to be inferred, and its hyper-parameters are determined by cross validation.
Then we formulate network inference as a matrix completion problem regularized by our degree prior. Our theoretical analysis indicates that this prior favors a network following the learned degree distribution, and may lead to improved network recovery error bound than previous work.
Experimental results on both simulated and real biological networks demonstrate the superior performance of our method in various settings.
\end{abstract}

\footnotetext[1]{Equal Contribution.}

\section{Introduction}

Network inference or structure learning has been widely studied in machine learning. There are two typical scenarios. In the first scenario, the task is to estimate the structure of an undirected graphical model from a high-dimensional dataset, e.g., learning gene co-expression networks from gene expression data \cite{marbach2012wisdom,de2010advantages,zhang2005general}. This task has been extensively studied, with a popular method being the graphical Lasso which assumes the underlying graph to be sparse \cite{meinshausen2006high, yuan2007model, friedman2008sparse,banerjee2008model, wainwright2006high}.
If prior information regarding clusters or blocks of the network is available, one may apply group penalties to promote desired patterns among the edges in a cluster \cite{friedman2010note, mohan2014node, tibshirani2005sparsity}. More recently, structure inducing norms/priors \cite{bach2010structured,candes2008enhancing} that promote a hub structure \cite{mohan2014node,tan2014learning,mohan2012structured} or a scale-free network \cite{liu2011learning,tang2015learning,defazio2012convex} are proposed. \par

In the second scenario of network inference, the task is to reconstruct the whole network or predict missing links based on a subset of observed edges, e.g., link prediction in a social network \cite{liben2007link} and inferring unknown protein-protein interactions (PPIs) from experimentally-validated PPIs \cite{wang2013predicting,dai2010low}. There are many solutions to this problem given different assumptions on data generation. For example, the problem can be solved by using influence cascade models under the diffusion process assumption \cite{daneshmand2014estimating, pouget2015inferring}. Another popular approach is to formulate the problem as a matrix completion problem \cite{wang2013predicting,huang2013robust,hsieh2014pu}, in which the matrix to be completed is often assumed to have certain structural properties, e.g., being low-rank \cite{candes2009exact,ding2006orthogonal,mnih2007probabilistic,cai2010singular}.
This is reasonable since many real-world networks have only a small number of degree of freedom.
% Low-rank matrix completion approaches have been shown to be very effective in collaborative filtering \cite{bennett2007netflix,Koren_09a}. \par

This paper focuses on the second scenario, i.e., predicting the whole network from a subset of observed edges. We formulate such a network inference problem as a matrix completion problem regularized by our novel node-specific degree prior. We learn the  degree of an individual node from the observed edges by cross-validation, so that the learned degree is (approximately) consistent with the partial observation. Considering that the observed degree distribution may be different from the true degree, we use a soft rather than hard degree constraint in our prior.

To justify our method, we show theoretically that our node-specific degree prior indeed can help induce a network that follows a given degree distribution. Under reasonable assumptions on the observation process, we provide upper bound on the expected recovery error of our network inference algorithm, which is superior than the error bound of existing approach due to the additional regularization by our degree prior. Furthermore, we experiment with real biological networks data, and show that by tuning the hyper-parameters of our degree-based prior from the observed edges, the proposed method can obtain much better performance than existing methods for network inference. \par

The rest of the paper is organized as follows. In section \ref{sec:prior}, we introduce our node-specific degree prior, and show that it can induce a graph following a given degree distribution. % In this section we also present how to fine-tune a learned degree from observed edges to obtain the best network inference performance. 
In section \ref{sec:optimization}, we formulate the network inference problem as a regularized matrix completion problem, and present its optimization procedure. %a decomposition algorithm to solve it. 
In section \ref{sec:related}, we discuss the difference between our method and some closely related works. In section \ref{sec:results}, we compare our method with related methods in terms of prediction accuracy on both synthetic and real datasets. Finally, we provide concluding remarks in section \ref{sec:conclusion}.

\paragraph{Notations.}
Let $G=(V,E)$ denote the underlying network to be inferred, where $V$ is the set of $p$ vertices ($|V|=p$) and $E$ the true edge set. We use $\E\in \bbR^{p\times p}$ to denote the adjacency matrix of $G$, i.e., $\E_{ij} = 1$ if and only if $(i,j)\in {E}$. In this paper, we assume $\mathbf{E}_{ii} = 0$ for all $i$, i.e., there is no edge connecting each node to itself. 
We denote by ${\Omega}\subseteq E$ the set of observed edges, and $\mathbf{\Omega}\in \bbR^{p\times p}$ the indicator matrix such that $\bOmega_{ij} = 1$ if and only if $(i,j)\in {\Omega}$.

To infer the underlying graph $G$ from ${\Omega}$, we first estimate a real-valued symmetric matrix $\X$ from $\mathbf{\Omega}$, each entry of which indicates the strength of the existence of an edge between two vertices. We then predict the largest $K$ entries of (the upper-diagonal of) $\X$ as edges, where $K$ is the number of desired edges for the whole network. Alternatively, we can use a thresholding procedure to predict edges from $\X$.

\section{Network Inference by Node-Specific Degree Prior}\label{sec:prior}

In this section, we propose a new degree-based prior to regularize the network inference problem. We will show that this degree prior helps to induce a network following the desired node-specific degree distribution. We also estimate the recovery error bound of the constrained network inference problem under reasonable assumptions.

\subsection{Inducing the desired degree distribution}
There have been several degree priors/norms for inducing a scale-free (or hub) network \cite{liu2011learning,defazio2012convex,mohan2012structured,tang2015learning}. But most of these priors work at the global level, e.g., they assume degrees of all nodes approximately follows a power-law distribution.
%In this paper we will first estimate the degree of each individual node and then propose a new degree-based prior such that the predicted network follows the given node-specific degree distribution. \par
In constrast to these existing work, we would like to use the (noisy) degree information at each individual node. As mentioned before, we assume that the target network structure is implied by a real-valued symmetric matrix $\X \in \mathbb{R}^{p\times p}$, and we may use the top $K$ entries in $\X$ as the predicted edges (including observed edges). 
Let $\mathbf{d}=(d_1,d_2,...,d_p)$ be the desired degree of the $p$ nodes. We propose the following prior for $\X$:
\begin{align}
  & S_H(\X,\mathbf{d},\alpha) \nonumber \\
  & =\sum_{i=1}^p{\frac{H_1^{\alpha}\X_{i,[1]}+H_2^{\alpha}\X_{i,[2]}+...+H_{p-1}^{\alpha}\X_{i,[p-1]}}{H_{d_i}^{\alpha}}}
  \label{degree_inducing}
\end{align}
where $\alpha>0$ is a hyper-parameter and $H$ is a monotonically increasing positive sequence, i.e., $0<H_1<H_2<...<H_{p-1}$. In this paper, we use $H_i=\log(i+1)$ for $1\leq i < p$, although there are many choices for $H$ as long as it increases at a moderate rate. $\X_{i,[j]}$ denotes the $j^{th}$ largest entry in the $i^{th}$ row of $\X$ excluding $\X_{ii}$. That is, the likelihood of $[j]$ and $i$ forming an edge is ranked in the $j^{th}$ position among all the possible edges adjacent to $i$.

Theorem \ref{property} below shows that the prior in \eqref{degree_inducing} favors a graph that follows a given degree distribution. %% $\mathbf{d}=(d_1,d_2,...,d_p)$.
\begin{theorem} \label{property}
Let $\mathbf{d}=(d_1,d_2,...,d_p)$ be the degree of the underlying network where $\sum_{i=1}^p{d_i}=2K$. Let $\X$ be a real-valued symmetric matrix with non-negative entries and $\mathbf{d}'=(d'_1,d'_2,...,d'_p)$ ($\sum_{i=1}^p{d'_i}=2K$) be the degree distribution of the network resulting from  the top $K$ upper-diagonal entries of $\X$. If $\mathbf{d}'\neq \mathbf{d}$, then there exists another symmetric matrix $\X^* \in \mathbb{R}^{p\times p}$ satisfying the following conditions: (1) $\X^*$ has the same set of entries as $\X$; (2) the network derived from the top $K$ entries of $\X^*$ has degree $\mathbf{d}=(d_1,d_2,...,d_p)$; and (3) $S_H(\X^*,\mathbf{d},\alpha)\leq S_H(\X,\mathbf{d},\alpha)$.
\end{theorem}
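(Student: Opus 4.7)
The plan is an exchange argument. Consider the finite set $\calS$ of symmetric zero-diagonal matrices with the same multi-set of entries as $\X$. I aim either to construct directly, or to extract as a minimizer over $\calS$, the required $\X^*$, and to show that $S_H$ cannot increase in moving from $\X$ to $\X^*$. Since $\sum_i d_i=\sum_i d'_i=2K$ but $\mathbf{d}\neq\mathbf{d}'$, there always exist nodes $a$ with $d'_a>d_a$ (excess) and $b$ with $d'_b<d_b$ (deficit), so a direction in which to try to rewire is always available. A key structural observation I would exploit first is that within any row $i$ of $\X$, the entries corresponding to edges in the top-$K$ set $E'$ are exactly the $d'_i$ largest entries of that row, because the global top-$K$ cutoff respects each row's internal ordering: if $\X_{ij}$ is in the top $K$ and $\X_{ij'}>\X_{ij}$, then $\X_{ij'}$ is also in the top $K$.

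The basic move is a ``reroute'' swap $\X_{ac}\leftrightarrow\X_{bc}$ on a common column $c$ chosen so that $\X_{ac}$ has row-$a$ rank in the excess window $(d_a,d'_a]$ and $\X_{bc}$ has row-$b$ rank in the deficit window $(d'_b,d_b]$. Row $c$'s multi-set is unchanged by such a swap, the new top-$K$ is $E'\setminus\{(a,c)\}\cup\{(b,c)\}$, and the degree shifts exactly one unit from $a$ to $b$ with no other degree change. Writing $\Delta S_H=\Delta C_a/H_{d_a}^{\alpha}+\Delta C_b/H_{d_b}^{\alpha}$ for $C_i=\sum_j H_j^{\alpha}\X_{i,[j]}$, a direct computation that tracks how the sorted orders of rows $a$ and $b$ rearrange as $u=\X_{ac}$ slides down to $v=\X_{bc}$ in row $a$ (and vice versa in row $b$) gives $\Delta C_a=-(u-v)\bar H_a^{\alpha}$ and $\Delta C_b=(u-v)\bar H_b^{\alpha}$, where $\bar H_a^{\alpha}$ and $\bar H_b^{\alpha}$ are convex combinations of $H^{\alpha}$-values over the rank intervals traversed. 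The choice of $c$ forces $\bar H_a^{\alpha}/H_{d_a}^{\alpha}>1\geq\bar H_b^{\alpha}/H_{d_b}^{\alpha}$ (since the row-$a$ interval lies in ranks $>d_a$ and the row-$b$ interval lies in ranks $\leq d_b$), so $\Delta S_H<0$. Iterating this move at most $\tfrac{1}{2}\|\mathbf{d}'-\mathbf{d}\|_1$ times produces $\X^*$.

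The main obstacle I foresee is guaranteeing that a valid column $c$ always exists. The excess set $R_a=\{c:r_a(c)\in(d_a,d'_a]\}$ and the deficit set $L_b=\{c:r_b(c)\in(d'_b,d_b]\}$, of sizes $d'_a-d_a$ and $d_b-d'_b$ among the at most $p-2$ candidate columns distinct from $a,b$, can in principle be disjoint. The cleanest resolution I would pursue is a non-constructive optimality argument: let $\X^*$ be any minimizer of $S_H$ over the finite set $\calS$, assume for contradiction that its top-$K$ degree sequence is not $\mathbf{d}$, and show that either a direct reroute is available---strict decrease, contradicting optimality---or else a short composite rewiring through an auxiliary node supplies the same contradiction. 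The careful case analysis covering the disjoint case, together with the verification that any composite swap is also weakly decreasing, is the technical crux I expect to occupy most of the work.
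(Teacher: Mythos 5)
Your local-exchange computation for a single reroute (the sign of $\Delta C_a$ and $\Delta C_b$ as convex combinations of $H^\alpha$-values over the traversed rank intervals, and the conclusion $\bar H_a^{\alpha}/H_{d_a}^{\alpha}>1\geq \bar H_b^{\alpha}/H_{d_b}^{\alpha}$) is sound, but the proposal stops exactly at the step that carries all the difficulty: you never establish that a penalty-decreasing move \emph{exists} whenever the top-$K$ degree sequence differs from $\mathbf{d}$. As you note, the excess set $R_a$ and deficit set $L_b$ can be disjoint, and the fallback you sketch is not innocuous. A ``composite rewiring through an auxiliary node'' in a symmetric matrix necessarily perturbs the rows of the auxiliary columns as well: swapping $\X_{ac}\leftrightarrow\X_{bc'}$ with $c\neq c'$ changes the multisets of rows $c$ and $c'$ and shifts their degrees, so the move is only admissible if $c$ and $c'$ themselves happen to sit on the correct sides of the excess/deficit partition. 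Showing that some alternating sequence of such swaps both terminates at degree sequence $\mathbf{d}$ and is weakly $S_H$-decreasing at every step is essentially an alternating-circuit argument that you have not supplied; the non-constructive ``minimizer over $\calS$'' framing does not remove this burden, since the contradiction still requires exhibiting a strictly improving move at any non-conforming minimizer (and ties $u=v$, where the move changes the selected edge set without changing the matrix or the penalty, need separate handling). As written, the proof is therefore incomplete at its crux.

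For contrast, the paper avoids local moves entirely: it discards the positions of $\X$, sorts all upper-diagonal values descendingly, and greedily reinserts them, assigning the $s$-th largest value to the pair of rows whose next-available coefficients $\frac{H_{A[i]+1}^{\alpha}}{H_{d_i}^{\alpha}}+\frac{H_{A[j]+1}^{\alpha}}{H_{d_j}^{\alpha}}$ have the smallest sum. Because every coefficient $H_j^{\alpha}/H_{d_i}^{\alpha}$ with $j\le d_i$ is at most $1$ and every coefficient with $j>d_i$ exceeds $1$, the first $K$ placements exhaust exactly the multiset of the $2K$ smallest coefficients, which forces $A[i]=d_i$ for all $i$ and hence condition (2); condition (3) then follows from a rearrangement-inequality observation (larger values always receive smaller coefficients). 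This global reconstruction sidesteps the existence-of-a-swap problem that blocks your argument, so if you want to salvage your route you must either prove the alternating-circuit lemma in full or switch to a construction of this type.
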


\begin{proof}
%Recall the definition of our regularizer
%\begin{align*}
%  S_H( & \mathbf{X},d,\alpha) \\
%  &=\sum_{i=1}^p{\frac{H_1^{\alpha}\X_{i,[1]}+H_2^{\alpha}\X_{i,[2]}+...+H_{p-1}^{\alpha}\X_{i,[p-1]}}{H_{d_i}^{\alpha}}}.
%\end{align*}
Let $S_H^i (\mathbf{X},\mathbf{d},\alpha)$ be the sum of the terms associated with node $i$ in \eqref{degree_inducing}. Then we have
\begin{align*}
  S_H^i (& \mathbf{X},\mathbf{d},\alpha) \\
  & =\frac{H_1^{\alpha}}{H_{d_i}^{\alpha}} \X_{i,[1]} + \frac{H_2^{\alpha}}{H_{d_i}^{\alpha}} \X_{i,[2]}+...+ \frac{H_{p-1}^{\alpha}}{H_{d_i}^{\alpha}} \X_{i,[p-1]},
\end{align*}
which is a linear combination of the (sorted) entries of the $i^{th}$ row of $\mathbf{X}$. By the definition of  $H$, we have
\begin{align}\label{e:H-increasing}
\frac{H_1^{\alpha}}{H_{d_i}^{\alpha}} < \frac{H_2^{\alpha}}{H_{d_i}^{\alpha}} < \dots
< \frac{H_{d_i}^{\alpha}}{H_{d_i}^{\alpha}}=1 < \dots < \frac{H_{p-1}^{\alpha}}{H_{d_i}^{\alpha}}.
\end{align}

We now construct $\mathbf{X}^*$ from $\mathbf{X}$ as follows. First, we sort all the upper-diagonal elements of $\mathbf{X}$ descendingly to obtain the below sequence $\Y$:
\begin{align*}
Y_1 \ge Y_2 \ge \dots \ge Y_{K} \ge \dots \ge Y_{\frac{p(p-1)}{2}} \ge 0.
\end{align*}
Then we employ Algorithm~\ref{alg} to place each element of $\Y$ in a descending order into two entries $(i,j)$ and $(j,i)$ in $\mathbf{X}^*$. $Y_s$ appears twice in $S_H(\X^*,\mathbf{d},\alpha)$: one in $S_H^i (\X^*,\mathbf{d},\alpha)$ with coefficient $\frac{H_{[s_i]}^{\alpha}}{H_{d_i}^{\alpha}}$ where $[s_i]$ is the ranking of this entry in row $i$, and the other in $S_H^j (\X^*,\mathbf{d},\alpha)$ with coefficient $\frac{H_{[s_j]}^{\alpha}}{H_{d_j}^{\alpha}}$. Thus the contribution of $Y_s$ in $S_H(\X^*,\mathbf{d},\alpha)$ is $\left( \frac{H_{[s_i]}^{\alpha}}{H_{d_i}^{\alpha}} + \frac{H_{[s_j]}^{\alpha}}{H_{d_j}^{\alpha}} \right) Y_s$.
%We go through the sequence $Y$ in a descending order and make sure that a larger entry has a smaller coefficient, while the coefficient uniquely determines the entry where $Y_s$ is placed. The detailed algorithm is described in Algorithm~\ref{alg}.

\begin{algorithm}
\caption{\textbf{Construction of $\mathbf{X}^*$}.}
  \label{alg}
  \renewcommand{\algorithmicrequire}{\textbf{Input:}}
  \renewcommand{\algorithmicensure}{\textbf{Output:}}
  \begin{algorithmic}[1]
    \REQUIRE: $H$, $\alpha$, degree distribution $\mathbf{d}$, and $\mathbf{X}$.
    \STATE Initialize an array $A$ of $p$ elements, $A[i]\leftarrow 0,\quad \forall 1\le i\le p$.
    \STATE Sort the upper-diagonal entries of $\mathbf{X}$ into sequence $\Y$ in a descending order.
    \FOR{$s=1,2,\dots,\frac{p(p-1)}{2}$}
    \STATE Find two indices $i$ and $j$ such that $\frac{H_{A[i]+1}^{\alpha}}{H_{d_i}^{\alpha}} + \frac{H_{A[j]+1}^{\alpha}}{H_{d_j}^{\alpha}}$ is the smallest.
    \STATE Set $\mathbf{X}^*_{ij}\leftarrow Y_s, \quad \mathbf{X}^*_{ji}\leftarrow Y_s$.
    \STATE Update $A[i]\leftarrow A[i]+1,\quad A[j]\leftarrow A[j]+1$.
    \ENDFOR
    \ENSURE $\mathbf{X}^*$.
  \end{algorithmic}
\end{algorithm}

We now prove that $\X^*$ has the desired properties.
\paragraph{$\mathbf{X}^*$ is symmetric and has the same set of entries as $\mathbf{X}$.}
It is clear from step~5 of the algorithm that the resultant $\mathbf{X}^*$ is symmetric. Since in each iteration we assign a different entry of $\mathbf{X}$ to that of $\mathbf{X}^*$, the two matrices have the same set of entries.

\paragraph{$\mathbf{X}^*$ has the desired degree distribution $\mathbf{d}$.}
%We determine the entries of $\mathbf{X}^*$ in a descending order. The edges are determined in the first $K$ iterations. Note that
The network resulting from $\X^*$ is determined from the first $K$ iterations of Algorithm~\ref{alg}.
\eqref{e:H-increasing} indicates that $\frac{H_{p}^{\alpha}}{H_{d_i}^{\alpha}} \le 1 \le \frac{H_{q}^{\alpha}}{H_{d_i}^{\alpha}}$ for any $p\le d_i \le q$, so the following set $\mathcal{K}$ contains the smallest $\sum_{i=1}^p d_i=2K$ coefficients in $S_H(\mathbf{X},\mathbf{d},\alpha)$.
\begin{align*}
\mathcal{K} = \left\{ \frac{H_{1}^{\alpha}}{H_{d_1}^{\alpha}},\dots, \frac{H_{d_1}^{\alpha}}{H_{d_1}^{\alpha}}, \frac{H_{1}^{\alpha}}{H_{d_2}^{\alpha}},\dots,\frac{H_{d_2}^{\alpha}}{H_{d_2}^{\alpha}},\dots, \frac{H_{1}^{\alpha}}{H_{d_p}^{\alpha}},\dots,\frac{H_{d_p}^{\alpha}}{H_{d_p}^{\alpha}} \right\}
\end{align*}
Since step~4 of Algorithm~\ref{alg}  selects a pair of indices $i$ and $j$ with the smallest $\frac{H_{A[i]}^{\alpha}}{H_{d_i}^{\alpha}}$ and $\frac{H_{A[j]}^{\alpha}}{H_{d_j}^{\alpha}}$, these selected coefficients must be chosen from $\mathcal{K}$. As a result, upon the termination of the $K^{th}$ iteration in Algorithm~\ref{alg}, we have $A[i]=d_i$, $\forall 1\le i\le p$, and $\X^*$ has the desired degree distribution.

\paragraph{$S_H(\mathbf{X}^{*},\mathbf{d},\alpha)\le S_H(\mathbf{X},\mathbf{d},\alpha)$, i.e., $\X^*$ has a smaller penalty than $\X$.} Note that both $\mathbf{X}$ and $\mathbf{X}^{*}$ have the same set of ranked entries $Y_1 \ge Y_2,\dots,\ge Y_{\frac{p(p-1)}{2}}\ge 0$.
A larger entry in $\mathbf{X}^{*}$ always has a smaller coefficient in $\mathbf{X}^{*}$, and all entries and coefficients are non-negative, so the resultant $\mathbf{X}^{*}$ has the smallest penalty among all the matrices with the same set of entries.
\end{proof}

Theorem \ref{property} shows that \eqref{degree_inducing} is a structure-inducing prior that favors a network following a given distribution $\mathbf{d}$. We can tune the parameter $\alpha$ to control the impact of the prior. The induced graph would almost follow the degree distribution when $\alpha$ is large (e.g. $\alpha \to \infty$), as the penalty coefficient $\frac{H_j^{\alpha}}{H_{d_i}^{\alpha}}\rightarrow \infty$ for $\X_{i,[j]}$ where $j > d_i$. On the other hand, when $\alpha$ is small, the prior only weakly encourages the given degree distribution. Furthermore, when $\alpha \to 0$, the prior reduces to the $l_1$ norm. \par
Although \eqref{degree_inducing} induces a graph following a given degree distribution, it is a soft constraint and tolerates some noise in the degree distribution. In fact, the difference in penalty coefficients of two entries $\X_{i,[t]}$ and $\X_{i,[t+\delta]}$ is bounded as
\begin{align}
  \frac{H^{\alpha}_{t+\delta}-H^{\alpha}_{t}}{H^{\alpha}_{d_i}}\leq \frac{\log^{\alpha}(\delta+1)-\log^{\alpha}(1)}{\log^{\alpha}(d_i+1)}=\left( \frac{\log(\delta+1)}{\log(d_i+1)} \right)^{\alpha}
  \label{bound}
\end{align}
which is very small when $d_i$ is large and $\delta$ is small.
% (e.g. $1$ or $2$).
This bound implies the following properties of our prior in \eqref{degree_inducing}.%is a fuzzy and robust prior from two aspects:
\begin{enumerate}
% \item The prior is not very sensitive to noise in $\mathbf{d}$. That is, we do not have to estimate individual node degrees very accurately. Actually, the larger $d_i$ is, the more tolerant \eqref{bound} is to noise in $d_i$.
% \item When two elements $\X_{u,v}$ and $\X_{u,v'}$ in row $u$ have similar rankings, their corresponding coefficients in the regularizer are also close. 
%% Thus the prior is not very sensitive to the relative rankings . In other words, exchanging the rankings of these two entries will not result in drastic change in penalty.
\item The larger the degree of a node is, i.e., the more neighbors it has, the smoother its penalty coefficients are.
\item When two entries in the same row have similar values and thus similar rankings, their corresponding penalty coefficients in the regularizer are also close. 
\end{enumerate}
These smoothness properties are further fine-tuned via the parameter $\alpha$.

\subsection{Recovery Error Bound}

Mathematically, the network inference problem addressed in this paper is related to PU (Positive-Unlabeled) learning for matrix completion \cite{hsieh2014pu}. \par
Here instead of picking top $K$ entries, we assume that we derive the network from a real-valued symmetric matrix $\X^*$ by thresholding (the two ways are closely related). We assume that the 0/1 adjacency matrix $\E$ is observed from $\X^*$ by a thresholding process: Let $q\in \bbR$ be the threshold value, then $\E_{ij}= thr(\X^*_{ij}):=\ind{ \X^*_{ij} > q }$ where $\ind{\cdot}$ is the indicator function. Furthermore, we assume that a subset of the edge set $E=\{(i,j): \E_{ij}=1\}$ is observed by uniformly sampling elements from $E$ with probability $\rho\in (0,1)$. Recall that we use the matrix $\bOmega\in \bbR^{p\times p}$ to denote the observations, where $\Oij=1$ if the edge $(i,j)$ is observed and $\Oij=0$ otherwise.

We assume the underlying real-valued matrix $\X^*$ comes from the set $\calX$, defined as
\begin{align}
\calX:=\{& \X \in \bbR^{p\times p} \;\mid\; \X=\X^\top, \nonumber \\
 \quad 0\le \X \le 1, \quad & \norm{\X}_*\le t, \quad S_{H}(\X,\mathbf{d},\alpha)\le r \}, \nonumber
\end{align}
Where $0\le \X \le 1$ is elementwise comparison and $\norm{\X}_*$ is the nuclear norm (sum of singular values).
We assume the underlying matrix to have small nuclear norm as a proxy to being low-rank, which is a common approach for matrix completion \cite{srebro2005rank}. We use the following loss function to estimate $\X^*$ from $\calX$: 

\begin{align} \label{e:obj}
% \arg\min\limits_{\X} \quad
 L(\X;\bOmega) \nonumber & := \left( 1 - \frac{\rho}{2} \right) \sum_{(i,j):\Oij=1}\; \left(\Xij-1 \right)^2 \nonumber \\
 & + \frac{\rho}{2} \sum_{(i,j):\Oij=0} \left(\Xij-0\right)^2
%\text{s.t.}\quad & \X \in \calX, \nonumber
\end{align}
This loss function assigns two different weights to the squared error between $\Xij$ and $\Oij$ depending on the observed value $\Oij$. When the percentage of unobserved edges is high (i.e., $\rho$ is small), the loss function weighs more on correctly predicting the observed $1$'s and allows for larger errors for unobserved entries; otherwise it tends to predict smaller values for the unobserved entries. \par
Let $\hat{\X}$ minimize the loss function subject to the constraint that $\X \in \calX$. We may predict an edge set from $\hat{\X}$ by thresholding. That is, there is one edge between $i$ and $j$ if and only if $\ind{\hat{\X}_{ij}>q}=1$. We can estimate the \emph{expected recovery error} of $\hat{\X}$ as follows.
\begin{align} \label{expected-recovery-error}
R(\hat{\X})=\ExpO{ \sum_{i,j} \ind{ thr(\hat{\X}_{ij})\neq \Eij } }
\end{align}
where the expectation is taken over random selection of the observed edge set $\Omega$. We have the following theorem regarding the bound of the {expected recovery error}.

\begin{theorem}\label{mistake-bound}
Assume the underlying graph has a degree distribution $\mathbf{d}^* =(d_1^*,\dots,d_p^*)$, with $s=2 \abs{E}=\sum_{i=1}^p d_p^*$ and $d_{\max}^*=\max\ (d_1^*,\dots,d_p^*)$. Let $q < \frac{2-\rho}{3-2\rho}$ be the threshold value used to obtain edges from the underlying matrix $\X^*$. Let $\hat{\X}(\bOmega)$ be the minimizer of the loss function defined in \eqref{e:obj} subject to the constraint that $\X \in \calX$. Assume that the regularizer $S_H(\X,\mathbf{d},\alpha)$ uses an estimated degree distribution $\mathbf{d}=(d_1,\dots,d_p)$ with $d_{\max}=\max\ (d_1,\dots,d_p)$. Then with probability at least $1-\delta$, we have
\begin{align*}
R(\hat{\X}) \le \frac{4\gamma (2-\rho)}{\rho}  \left(
2 \min\left( A,B \right) + \sqrt{\frac{s \log 2/\delta}{2}}
\right)
\end{align*}
where $A=t C (2 \sqrt{d_{\max}^*} + \sqrt[4]{s})$, $B= r \log_2^\alpha (d_{\max}+1) $, $\gamma=\max\left(\frac{1}{q^2}, \frac{1}{(3-2\rho)\left(q - \frac{2-\rho}{3-2\rho} \right)^2} \right)$ and $C$ is a universal constant.
\end{theorem}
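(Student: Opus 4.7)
The plan is to follow the empirical-risk-minimization template for PU matrix completion: relate the $0/1$ recovery error $R(\hat{\X})$ to an excess expected loss via a calibration argument, and then bound that excess loss by the uniform deviation of the empirical loss from its expectation over $\calX$. The novel ingredient is to bound the Rademacher complexity of $\calX$ in two independent ways and take the tighter one -- the nuclear-norm bound that appears in prior low-rank matrix completion work, and an $\ell_1$ bound freshly extracted from the degree prior $S_H$.

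First, I would compute $L^{*}(\X) := \ExpO{L(\X;\bOmega)}$ by splitting according to whether $\Eij = 0$ or $1$ and using $\prob{\Oij=1}=\rho\Eij$. This yields the entry-wise decomposition $L^{*}(\X) = \sum_{i,j}\bigl[\Eij \ell_+(\Xij) + (1-\Eij)\ell_-(\Xij)\bigr]$ with $\ell_-(x) = (\rho/2)x^2$ and $\ell_+(x) = \rho(1-\rho/2)(x-1)^2 + \rho(1-\rho)x^2/2$. Completing the square shows that $\ell_+$ is a quadratic with minimizer $x^{*} = (2-\rho)/(3-2\rho)$ and second derivative $\rho(3-2\rho)$, while $\ell_-$ is minimized at $0$ with second derivative $\rho$. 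Under the threshold assumption $q < x^{*}$, an entry mis-thresholded by $\hat{\X}$ must incur an excess expected loss of at least $\rho/(2\gamma)$: either $(\rho/2)q^2$ when $\Eij=0$ and $\hat{\X}_{ij}>q$, or $\rho(3-2\rho)(q-x^{*})^2/2$ when $\Eij=1$ and $\hat{\X}_{ij}\le q$. Summing over entries (all of which lie in $[0,1]$ by definition of $\calX$) yields the calibration inequality $R(\X) \le (2\gamma/\rho)\bigl[L^{*}(\X) - L^{*}_{\min}\bigr]$.

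Next, I would invoke the ERM property $L(\hat{\X};\bOmega) \le L(\X^{*};\bOmega)$, which holds since $\X^{*}\in\calX$, to sandwich $L^{*}(\hat{\X}) - L^{*}(\X^{*})$ by $2\sup_{\X\in\calX}\abs{L(\X;\bOmega) - L^{*}(\X)}$. The supremum decomposes via standard symmetrization into a Rademacher-complexity term and a concentration tail. McDiarmid applied to the $s = 2\abs{E}$ independent Bernoulli indicators $\Oij$, after bounding the per-observation sensitivity of $L$ by an $O(1)$ constant (which eventually folds into the $2-\rho$ prefactor), produces the $\sqrt{s\log(2/\delta)/2}$ term. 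For the Rademacher term I would then give two bounds. \textbf{Approach A} keeps only the nuclear-norm constraint $\norm{\X}_{*}\le t$: a Lipschitz contraction combined with spectral-norm bounds on random sign matrices (Latala/Seginer-type) gives the factor $A = tC(2\sqrt{d^{*}_{\max}} + \sqrt[4]{s})$. \textbf{Approach B} exploits the degree prior: since the smallest coefficient in $S_H$ is $H_1^{\alpha}/H_{d_{\max}}^{\alpha}$ and $\X$ is non-negative, the constraint $S_H(\X,\mathbf{d},\alpha)\le r$ implies the elementwise bound $\sum_{i,j}\Xij \le r\log_2^{\alpha}(d_{\max}+1)$, and the standard $\ell_1$-Rademacher calculation for bounded squared losses then delivers $B = r\log_2^{\alpha}(d_{\max}+1)$. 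Taking the smaller of the two bounds produces the $\min(A,B)$ appearing in the theorem.

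Finally I would collect the constants: multiplying the calibration constant $2\gamma/\rho$ by the $(2-\rho)$ factor coming from the loss scaling (the worst coefficient in $L$ is $(2-\rho)/2$) and by a factor $2$ from the ERM sandwich plus symmetrization yields the prefactor $4\gamma(2-\rho)/\rho$, after which $\min(A,B)$ and the $\sqrt{s\log(2/\delta)/2}$ tail slot into place. The step I expect to be the main obstacle is the Rademacher bound in Approach $B$: controlling the complexity of the squared PU loss class purely through the $\ell_1$ bound on $\X$, without incurring a spurious $\sqrt{\log p}$ factor, requires careful use of the boundedness $\Xij\in[0,1]$ and the structure of the quadratic loss. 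A secondary technical point is that $L^{*}$ is not minimized at $\X^{*}$, so one must verify in the ERM inequality that the additive bias $L^{*}(\X^{*}) - L^{*}_{\min}$ is either dominated by the deviation term or absorbed into the Rademacher contribution by an appropriate choice of comparator.
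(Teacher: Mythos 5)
Your plan has the same architecture as the paper's proof --- a calibration of the $0/1$ recovery error against the excess expected quadratic loss, an ERM sandwich with symmetrization and McDiarmid over the $s$ entries with $\E_{ij}=1$, and a two-pronged Rademacher bound whose minimum yields $\min(A,B)$ --- and your two Rademacher computations coincide with the paper's: the Latala-type spectral bound on the random sign matrix for the nuclear-norm constraint, and for the degree prior the observation that its smallest coefficient is $H_1^{\alpha}/H_{d_{\max}}^{\alpha}$, giving $\sum_{i,j}\X_{ij}\le r\log_2^{\alpha}(d_{\max}+1)$ and then $\ell_1$/$\ell_\infty$ duality. Your worry about a spurious $\sqrt{\log p}$ in Approach B is unfounded: since the entries are non-negative, $\sum\sigma_{ij}\X_{ij}\le \norm{\sigma}_\infty\sum\X_{ij}$ holds deterministically, which is exactly the paper's argument. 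Where you genuinely differ is the calibration step: the paper detours through an intermediate \emph{label-dependent error} $R_\rho$ (its Lemmas~1 and~2, inherited from the PU-learning analysis of Hsieh et al.), first showing $R_\rho=\tfrac{\rho}{2}R+b$ and then comparing $R_\rho$ to the quadratic loss entrywise, whereas you compare $R$ directly to the expected loss $\ExpO{L(\X;\bOmega)}$ by noting that each mis-thresholded entry contributes excess expected loss at least $\rho/(2\gamma)$; your one-step version is cleaner and produces the identical constant $2\gamma/\rho$. One correction to the ERM step: the comparator should not be $\X^*$ but the entrywise population minimizer (equal to $0$ on non-edges and $\tfrac{2-\rho}{3-2\rho}$ on edges), because the bias $\ExpO{L(\X^*;\bOmega)}-L^*_{\min}$ that you flag as a "secondary point" is genuinely not dominated by the deviation term --- it can be as large as order $\rho q^2 p^2$. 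Switching the comparator removes the bias entirely; this is what the paper does, although the paper itself leaves the feasibility of that comparator in $\calX$ unverified, so your instinct that something needs checking there is correct.
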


This theorem shows that on average, the number of mistakes (i.e., $R(\hat{\X})$) made in the recovered network is bounded and the bound does not increase with respect to the matrix size $p^2$, but rather (mildly) depends on the maximum degree and the number of edges, and the complexity of the matrix. In case of a sparse network, the average error $\frac{R(\hat{\X})}{s}$ has a very small bound. See our supplementary material for a detailed proof of Theorem~\ref{mistake-bound}. Part of our proof follows the techniques used by \citet{hsieh2014pu}. That is, we relate the {expected recovery error} \eqref{expected-recovery-error} to the following \emph{label-dependent recovery error}.
\begin{align*}
R_{\rho} (\X;\bOmega) & = \sum_{i,j} \left\{ \left( 1-\frac{\rho}{2}\right) \ind{thr(\Xij)=0} \ind{\Oij=1} \right. \\
& + \left. \frac{\rho}{2} \ind{thr(\Xij)=1} \ind{\Oij=0} \right\},
\end{align*}
which is further related to the loss function defined in \eqref{e:obj}.
Then we can derive generalization guarantee for (bounded) real-valued loss function using the Rademacher complexity of $\calX$ \cite{bartlett2003rademacher} controlled by the nuclear norm and the degree prior. Our error bound is better than that of \citet{hsieh2014pu} due to the additional constraint of our degree-based prior. \par

\subsection{Learning Node Degree via Cross Validation}
%% In this section, we consider how to enhance prediction accuracy by considering both the least squares loss and the degree-based prior.
In order to use our node-specific degree prior, we need to estimate the degree of each individual node of the underlying network. The naive strategy of searching the space of all possible $\mathbf{d}$ is clearly infeasible. Thus, we will derive $\mathbf{d}$ from the observed degree $(o_1,o_2,...,o_p)$ where $o_i$ is the degree of node $i$ in the observed network $\mathbf{\Omega}$. Assuming that the network has $K$ edges, under uniform assumption we can estimate the degree of the predicted network by $\mathbf{d}=(d_1,d_2,...,d_p)$ where $d_i=\lceil \frac{2o_i}{\sum_{j=1}^p{o_j}}\times K \rceil$.

Let $F_{\rho}(\X)$ be any loss function for $\X$ where $\rho$ is a hyper-parameter. Considering both the loss and the degree-based prior, we solve the following regularized objective function
\begin{align}
\min_{\X}\  F_{\rho}(\mathbf{X})+\lambda S_{H}(\mathbf{X},c\mathbf{d},\alpha).
  \label{scaled}
\end{align}
where $S_{H}(\mathbf{X},c\mathbf{d},\alpha)$ is the degree-based prior and $c$ can be interpreted as an amplification factor. We define the new degree of node $i$ after multiplying $c$ as $d'_i= \min(\lceil c\times d_i \rceil, p-1)$. %By multiplying $c$, we do not change the relative degree of each node.
By setting $c>1$, we amplify the impact of those hub nodes, as the gap of estimated degrees between different hub nodes and non-hub nodes are enlarged, resulting in larger difference in their penalty coefficients in $S_{H}$.
As in the matrix completion setting, we may determine the hyperparameters $\rho$, $\lambda$, $c$, and $\alpha$ through cross validation. That is, we randomly hold out some observed edges as tuning set, and use the remaining observations to train the model and predict all missing edges. We then select the hyperparamters which gives best prediction accuracy on the held-out tuning set.

\subsection{Learning Node Degree via Cross Validation}
%% In this section, we consider how to enhance prediction accuracy by considering both the least squares loss and the degree-based prior.
In order to use our node-specific degree prior, we need to estimate the degree of each individual node of the underlying network. The naive strategy of searching the space of all possible $\mathbf{d}$ is clearly infeasible. Thus, we will derive $\mathbf{d}$ from the observed degree $(o_1,o_2,...,o_p)$ where $o_i$ is the degree of node $i$ in the observed network $\mathbf{\Omega}$. Assuming that the network has $K$ edges, under uniform assumption we can estimate the degree of the predicted network by $\mathbf{d}=(d_1,d_2,...,d_p)$ where $d_i=\lceil \frac{2o_i}{\sum_{j=1}^p{o_j}}\times K \rceil$.

Let $F_{\rho}(\X)$ be any loss function for $\X$ where $\rho$ is a hyper-parameter. Considering both the loss and the degree-based prior, we solve the following regularized objective function
\begin{align}
\min_{\X}\  F_{\rho}(\mathbf{X})+\lambda S_{H}(\mathbf{X},c\mathbf{d},\alpha).
  \label{scaled}
\end{align}
where $S_{H}(\mathbf{X},c\mathbf{d},\alpha)$ is the degree-based prior and $c$ can be interpreted as an amplification factor. We define the new degree of node $i$ after multiplying $c$ as $d'_i= \min(\lceil c\times d_i \rceil, p-1)$. %By multiplying $c$, we do not change the relative degree of each node.
By setting $c>1$, we amplify the impact of those hub nodes, as the gap of estimated degrees between different hub nodes and non-hub nodes are enlarged, resulting in larger difference in their penalty coefficients in $S_{H}$.
As in the matrix completion setting, we may determine the hyperparameters $\rho$, $\lambda$, $c$, and $\alpha$ through cross validation. That is, we randomly hold out some observed edges as tuning set, and use the remaining observations to train the model and predict all missing edges. We then select the hyperparamters which gives best prediction accuracy on the held-out tuning set.

\section{Model and Optimization}  \label{sec:optimization}

\subsection{Matrix Completion with Learned Node-Specific Degree Prior}
We now consider how to solve the network inference problem (i.e., recover $\E$) given some observed edges (i.e., $\mathbf{\Omega}$) using matrix completion regularized by our degree-based prior. Assume that the observed degree of the $p$ variables is $(o_1,o_2,....,o_p)$ and we would like to output $K$ edges as the solution.
Let $\mathbf{d}=(d_1,d_2,...,d_p)$ be the estimated degree of the $p$ nodes in the predicted network where $d_i=\lceil \frac{2o_i}{\sum_{j=1}^p{o_j}}\times K \rceil$. Using non-negative matrix tri-factorization \cite{ding2006orthogonal}, the regularized matrix completion problem can be formulated as
\begin{align} \label{regularized_objective}
\min_{\U\ge 0,\SS\ge 0}\quad & \sum_{ij} \M_{ij}((\U\mathbf{S}{\U}^T)_{ij}-\mathbf{\Omega}_{ij})^2 \nonumber \\
&\quad+\lambda S_H(\U\mathbf{S}{\U}^T,c\mathbf{d},\alpha)
\end{align}
Where $\M_{ij}=1-\frac{\rho}{2}$ if $\mathbf{\Omega}_{ij}=1$ and $\M_{ij}=\frac{\rho}{2}$ otherwise. \par
In the above objective function, the first term is the loss function and the second term is our degree-based prior. Meanwhile, $\lambda$, $c$, $\alpha$ and $\rho$ are the hyper parameters to be tuned through cross-validation. This tri-factorization method has previously been applied to recover protein-protein interaction (PPI) networks~\cite{wang2013predicting}.
Note \eqref{regularized_objective} is different from \eqref{e:obj} % (subject to the constraint $\mathbf{USU}^T \in \calX$) 
in that it removes the hard constraints $\mathbf{USU}^T \le 1$ (which is often satisfied by the solution) and it uses factorization instead of nuclear norm constraint to enforce low-rank structure.

\subsection{Optimization}
By introducing additional variables $\X$ and the constraint $\X=\U\mathbf{S}\U^T$, we rewrite \eqref{regularized_objective} as
\begin{align}
\min_{\U\ge 0, \mathbf{S}\ge 0, \X} & \sum_{ij} \M_{ij} ( (\U \mathbf{S} \U^T)_{ij}-\mathbf{\Omega}_{ij} )^2 +\lambda S_H(\X,c\mathbf{d},\alpha) \nonumber \\ \label{new_regularized_objective}
\text{s.t.} & \quad \mathbf{USU}^T = \X.
\end{align}
%Where $U_{ij}={M_{ij}}^{\frac{1}{4}}(VS^{\frac{1}{2}})_{ij}$, $D_{ij}={M_{ij}}^{\frac{1}{2}}\Omega_{ij}$.
This formulation can be solved by alternating direction method of multipliers (ADMM, \citealp{boyd2011distributed}), which has been successfully applied to non-negative matrix factorization problems \citep{sun2014alternating}. ADMM solves
\eqref{new_regularized_objective} by iterating the following three steps till convergence, where $\eta>0$ is an optimization parameter we fix
\begin{align}
  \label{Step_One}
& \mathbf{U}^{(t+1)},\mathbf{S}^{(t+1)} = \min_{\mathbf{U}\geq 0,\mathbf{S}\geq 0}\ \sum_{ij} \M_{ij} ( (\U \mathbf{S} \U^T)_{ij}-\mathbf{\Omega}_{ij} )^2 \nonumber \\
& \quad  + \frac{\eta}{2}||\mathbf{USU}^T-\mathbf{X}^{(t)}+\mathbf{Z}^{(t)}||_F^2, \\
  \label{Step_Two}
& \mathbf{X}^{(t+1)} \nonumber =\min_{\mathbf{X}\geq 0}\ \lambda S_H(\mathbf{X},c\mathbf{d},\alpha) \nonumber \\
&\quad +\frac{\eta}{2}||\mathbf{U}^{(t+1)}\mathbf{S}^{(t+1)}({\mathbf{U}^{(t+1)}})^T-\mathbf{X}+\mathbf{Z}^{(t)}||_F^2, \\
  % \label{Step_Three}
& \mathbf{Z}^{(t+1)}=\mathbf{Z}^{(t)}+\mathbf{U}^{(t+1)}\mathbf{S}^{(t+1)}({\mathbf{U}^{(t+1)}})^T-\mathbf{X}^{(t+1)}. \nonumber
\end{align}

\subsubsection{Solving \eqref{Step_One}}
Let $\mathbf{Y}^{(t)}=\mathbf{X}^{(t)}-\mathbf{Z}^{(t)}$ and combine the similar terms in \eqref{Step_One}. Solving \eqref{Step_One} is equivalent to minimizing the function
\begin{align}
  &\hspace*{-2ex}\sum_{ij}{\M_{ij}((\mathbf{USU}^T)_{ij}-\mathbf{\Omega}_{ij})^2}+\frac{\eta}{2}\sum_{ij}{((\mathbf{USU}^T)_{ij}-\mathbf{Y}^{(t)}_{ij})^2} \nonumber \\
  &\hspace*{-0ex}=\sum_{ij}{(\M_{ij}+\frac{\eta}{2})(\mathbf{USU}^T)_{ij}^2} \nonumber \\   %\label{combine_similar}
&\hspace{2ex} -\sum_{ij}{(2\M_{ij}\mathbf{\Omega}_{ij}+\eta \mathbf{Y}_{ij}^{(t)})(\mathbf{USU}^T)_{ij}} + const,  \nonumber
\end{align}
which can be further reduced to solving
%\begin{align}
%  \min_{\mathbf{U}\geq 0,\mathbf{S}\geq 0}{| \mathbf{USU}^T-\mathbf{K}|_F^2}
%\end{align}
%Where $[K_{ij}]=[\frac{\mathbf{E}_{ij}+\frac{\eta}{2} \mathbf{Y}_{ij}^{(t)}}{\sqrt{1+\frac{\eta}{2}}}]$.
% Where $[K_{ij}]=[\frac{\mathbf{E}_{ij}+\frac{\eta}{2} \mathbf{Y}_{ij}^{(t)}}{\sqrt{1+\frac{\eta}{2}}}]$.
\begin{align} \nonumber
  \min_{\mathbf{U}\geq 0,\mathbf{S}\geq 0} \sum_{ij} (\M_{ij}+\frac{\eta}{2}) \left( (\U\SS\U^T)_{ij}- \frac{\M_{ij}\Oij+\frac{\eta}{2}\mathbf{Y}_{ij}^{(t)}}{\M_{ij}+\frac{\eta}{2}}\right)^2.
\end{align}
This is a weighted matrix tri-factorization problem which we solve with the algorithm of \citet{ding2006orthogonal}.

\subsubsection{Solving \eqref{Step_Two}}
We can  divide \eqref{Step_Two} into $p$ subproblems, one for each node of the graph. The objective for the $i$-th node is
\begin{align}
  \label{small_problem}
  \min_{\mathbf{X}_i\geq 0}\ \frac{\eta}{2} \norm{\mathbf{X}_i-\mathbf{A}_i}^2+\lambda\sum_{k=1}^{p-1}{b_i(k) \X_{i,[k]}},
\end{align}
where $\mathbf{A}=\mathbf{U}^{(t+1)}\mathbf{S}^{(t+1)}({\mathbf{U}^{(t+1)}})^T+\mathbf{Z}^{(t)}$, $\mathbf{X}_i$ and $\mathbf{A}_i$ are the $i^{th}$ column of $\A$ and $\X$ respectively, and $b_i(k)=\frac{H^{\alpha}_k}{H^{\alpha}_{cd_i}}$ for $1\leq i < p$. However, these $p$ smaller problems are not independent since $\X$ needs to be symmetric.
To deal with this constraint, we apply the idea of \citet{defazio2012convex} and rewrite the problem as 
\begin{align}
  \label{dual_relax}
  \min_{\mathbf{X}\geq 0}\ &{\frac{\eta}{2} \norm{\mathbf{X}-\mathbf{A}}_F^2+\lambda S_H(\mathbf{X},c\mathbf{d},\alpha)}  \\
  & \text{s.t.} \quad \mathbf{X}=\mathbf{X}^T. \nonumber
\end{align}
Then we can apply dual decomposition \cite{sontag2011introduction} to \eqref{dual_relax}. Specifically, we introduce a Lagrangian term $\trace{ \B (\mathbf{X}-\mathbf{X}^T) }$ % (point-wise product of two matrices), we
and minimize the following objective function in each iteration of dual decomposition
\begin{align}
  \label{dual_iteration}
  \min_{\mathbf{X}\geq 0}\ {\frac{\eta}{2} \norm{ \mathbf{X}-\mathbf{A} }_F^2+\lambda S_H(\mathbf{X},c\mathbf{d},\alpha)+ \trace{(\B-\B^T) \mathbf{X}} },
\end{align}
where $\B\in \bbR^{p\times p}$ and its entries are adjusted according to the difference between $\mathbf{X}$ and $\mathbf{X}^T$ % accordingly as dual decomposition goes on. 
in each iteration. By completing the squares, \eqref{dual_iteration} is transformed into the form
\begin{align}
  \min_{\mathbf{X}\geq 0}\ &{\frac{\eta}{2} \norm{\mathbf{X}-\A'}_F^2+\lambda S_H(\mathbf{X},c\mathbf{d},\alpha)+const}.
\end{align}
This objective can now be decomposed into $p$ independent subproblems, each of which has the same form of \eqref{small_problem}. We solve each subproblem using the algorithm of \citet{tang2015learning,bogdan2013statistical} with time complexity $O(p\log p)$.

\section{Related Work} \label{sec:related}
There have been several work on applying matrix completion techniques to the link prediction task, including \citet{hsieh2014pu,wang2013predicting,huang2013robust}. \citet{hsieh2014pu} have considered the PU (positive-unlabeled) learning setting for matrix completion, where the observed entries are purely $1$s while all other entries are unlabeled. %The authors also prove that their methods yield strong error bound on Frobenius error.%
\citet{wang2013predicting} have applied the orthogonal matrix tri-factorization technique \cite{ding2006orthogonal} to the problem of protein-protein interaction prediction (a real biological problem that can be modeled as link prediction), and yielded significant improvement on prediction accuracy. \citet{huang2013robust} have used a trace norm regularized discrete matrix completion to predict new links in social networks and protein-protein interaction network. However, all these matrix completion based approaches does not carefully utilize the degree information conveyed by the observed samples.

In this work, we have used a degree prior regularized matrix completion framework for predicting missing edges of a network. % and show both theoretically and experimentally that our method can achieve better accuracy (lower recovery error) comparing with existing matrix completion based method. 
Although our degree prior may be mathematically similar to some existing scale-free priors in the literature \cite{liu2011learning,defazio2012convex,tang2015learning}, there exist clear difference between our work and theirs. Existing priors are all based upon the (global) scale-free assumption, while the prior we use here is directly learned from observed samples. Both theoretical and experimental results show that our learned prior leads to better prediction performance in practice. Furthermore, our work addresses a very different problem setting than the abovementioned previous works do. While those previous works estimate the network from a given covariance matrix calculated from observed attributes of nodes (e.g., gene expression data), our work aims to predict missing links in a partially observed network without any observed attributes of network nodes. For example, those works are suitable for gene coexpression network inference from a set of measured gene expression levels, but not for missing link prediction in a social network. The latter problem can be attacked using the method proposed here.

\section{Experimental Results} \label{sec:results}
We have implemented the proposed method described in section \ref{sec:optimization} (denoted as Tri+Degree) and then compare its performance against the following 4 methods: tri-factorization method without any prior (denoted as Tri), tri-factorization regularized by $l_1$ penalty (denoted as Tri+L1), tri-factorization regularized by scale-free penalty (denoted as Tri+Scale) \cite{liu2011learning} and PU learning for matrix completion (denoted as PU) \cite{hsieh2014pu}. Cross validation is performed for all the methods for hyper parameter selection.
 We report for each method the averaged result over 5 random seeds (used for the initialization of matrix completion).

We conduct three experiments to test the performance of the 5 methods.
In the first experiment, we use a simulated gene co-expression network in the well-known DREAM5 challenge \cite{marbach2012wisdom}.
The second experiment makes use of one protein-protein interaction (PPI) network in BioGrid \cite{stark2006biogrid}.
We randomly sample some of the known edges in this PPI network as observations and use the unsampled edges as the test set.
In the last experiment, we test different methods using multiple releases of the protein-protein interaction network for the species ``Rat''. In particular, we predict the network from an old release and then evaluate the consistency between the predicted network and a newer release.
Our experiments show that our method obtains the best performance in almost all the experimental settings and in some cases, our method shows significant advantage over the others.

\begin{figure*}[!htb]
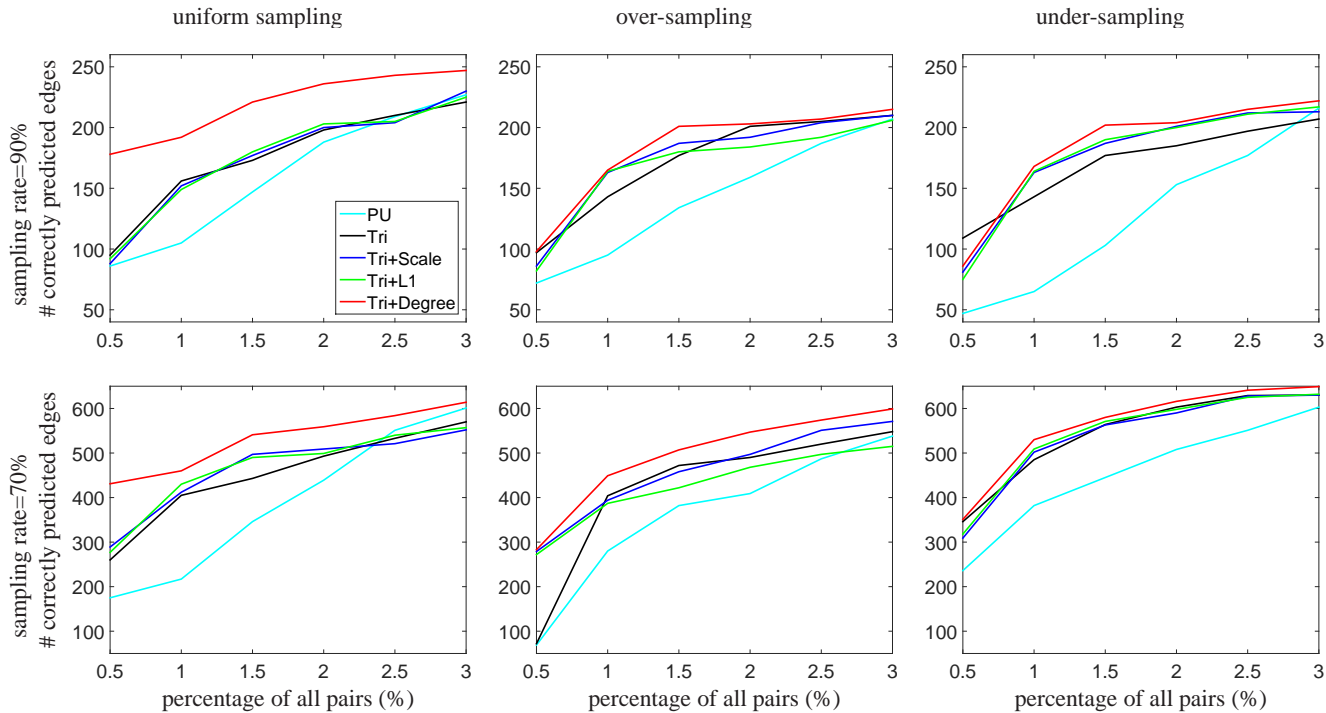

  \centering
\psfrag{sampling rate=0.9}{}
\psfrag{sampling rate=0.7}{}
  \begin{tabular}{@{}c@{\hspace{0.01\linewidth}}c@{\hspace{0\linewidth}}c@{\hspace{0\linewidth}}c@{}}
& uniform sampling & over-sampling & under-sampling \\[2ex]
\rotatebox{90}{\hspace{3em}sampling rate=90\%} &
\psfrag{percentage}{}
\psfrag{predict}[bc][c]{\# correctly predicted edges}
    \includegraphics[width=0.32\textwidth]{Figures/2.eps} &
\psfrag{percentage}{}
\psfrag{predict}{}
     \includegraphics[width=0.32\textwidth]{Figures/3.eps} &
\psfrag{percentage}{}
\psfrag{predict}{}
      \includegraphics[width=0.32\textwidth]{Figures/5.eps} \\
\rotatebox{90}{\hspace{3em}sampling rate=70\%} & 
\psfrag{percentage}[c][c]{percentage of all pairs (\%)}
\psfrag{predict}[bc][c]{\# correctly predicted edges}
    \includegraphics[width=0.32\textwidth]{Figures/1.eps} &
\psfrag{percentage}[c][c]{percentage of all pairs (\%)}
\psfrag{predict}{}
    \includegraphics[width=0.32\textwidth]{Figures/4.eps} &
\psfrag{percentage}[c][c]{percentage of all pairs (\%)}
\psfrag{predict}{}
    \includegraphics[width=0.32\textwidth]{Figures/6.eps} \\
  \end{tabular}
  \caption{Performance of the 5 methods on a simulated gene co-expression network. $x$-axis shows the percentage of all possible pairs ($\frac{p(p-1)}{2}$) being predicted, and $y$-axis shows the number of correctly predicted edges (excluding already observed edges).}
  \label{gene}
\end{figure*}

\begin{figure*}[!htb]
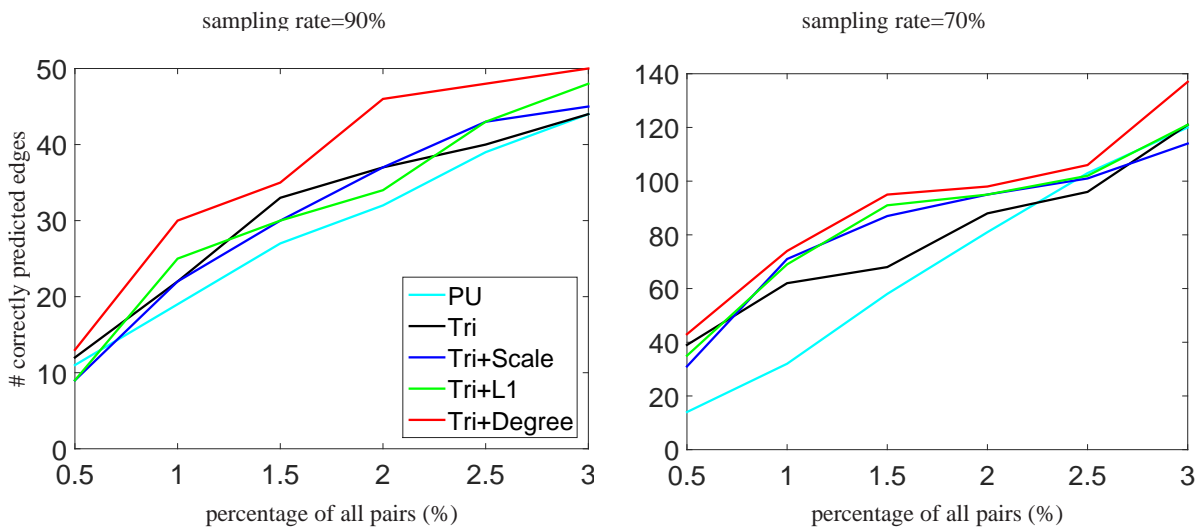

  \centering
\begin{tabular}{@{}c@{\hspace{0.0\linewidth}}c@{}}
sampling rate=90\% & sampling rate=70\% \\[1.5ex]
\psfrag{sampling rate=0.9}{}
\psfrag{percentage}[c][c]{percentage of all pairs (\%)}
\psfrag{predict}[c][c]{\# correctly predicted edges}
  \includegraphics[width=0.45\textwidth]{Figures/8.eps}& 
\psfrag{sampling rate=0.7}[c][c]{}
\psfrag{percentage}[c][c]{percentage of all pairs (\%)}
\psfrag{predict}[c][c]{}
  \includegraphics[width=0.45\textwidth]{Figures/7.eps}
\end{tabular}
  \caption{Performance of the 5 methods on a real protein-protein interaction network. $x$-axis shows the percentage of all possible pairs ($\frac{p(p-1)}{2}$) being predicted, and $y$-axis shows the number of correctly predicted edges (excluding already observed edges).}
  \label{BioGridPPI}
\end{figure*}

\begin{table*}[t]
\caption{Test results when $0.05\times \frac{p(p-1)}{2}$ new edges are predicted where $p$ is the number of proteins of the older release. From left to right, each entry contains the ratios of the edges only in a newer release being recovered by the following methods: PU, Tri, Tri+Scale, Tri+L1 and our method Tri+Degree.}
\begin{center}
	\begin{tabular}{l | c | c }
	- & 3.2.104 & 3.2.114 \\
	\hline
	3.1.94 & 0.3269 / 0.3075 / 0.2105 / 0.3324 / \textbf{0.39893} & 0.2944 / 0.3318 / 0.1916 / 0.3154 / \textbf{0.4112}  \\
	\hline
	3.2.104 & - & 0.2360 / 0.2360 / 0.3483 / 0.3820 / \textbf{0.5169} \\
	\end{tabular}
\end{center}
\label{tbl:multiReleases}
\end{table*}

\subsection{Gene Expression Network}
The ground truth network has $1565$ nodes and $3996$ edges.
We consider a node to be a hub if its degree is among top 20\% of all nodes.
To conduct the experiment, we sample some edges from the ground truth as observations. The following three different sampling conditions are tested. 
\begin{itemize}
\item \textbf{Uniform sampling:} All edges are randomly and uniformly sampled. Two different sampling rates $90\%$ and $70\%$ are used.
\item \textbf{Over-sampling:} Edges adjacent to hubs are over-sampled. That is, we sample $95\%$ of edges adjacent to hubs and $90\%$ of edges not adjacent to hubs. In another setting, we sample $75\%$ edges adjacent to hubs and $70\%$ of edges not adjacent to hubs.
\item \textbf{Under-sampling:} Edges adjacent to hubs are under-sampled. Two different settings are tested: $85\%$ vs $90\%$ and $65\%$ vs $70\%$ for edges adjacent to hubs vs. edges not adjacent to hubs, respectively.
\end{itemize}
The second and third strategies are used to test how robust our method is with respect to sampling bias.

We give the results obtained by different methods in Figure \ref{gene}, where each figure shows how many predictions agree with the ground truth as we are predicting more and more edges (increasing $K$) from the real-valued matrix. (We exclude the already observed edges for counting the ``number of correctly predicted edges'', so the max number of correct prediction for the 90\% sampling rate is 1/3 of that for the 70\% sampling rate.) Figures in the top row show results for the base sampling rate of 90\%, and figures in the bottom row show results for the more challenging base sampling rate of 70\%. The left column of Figure \ref{gene} shows the experimental results when the observed edges are uniformly sampled from true edges, which is the usual assumption in many real applications. In this case, the performance of our approach is much better than the others. This is because the degree distribution learned from the observation is consistent with the ground truth, and thus our degree prior is most effective. Further, a larger sampling rate results in a better performance than a smaller sampling rate because when more edges are observed, the learned degree distribution is closer to the ground truth. % \par
The middle column and the right column of Fig. \ref{gene} show results under the over-sampling and under-sampling conditions, respectively. In these two cases, our approach Tri+Degree still outperforms the others, though not as much as in the uniform sampling condition. This shows the robustness of our method with respect to the mismatch in sampling conditions. Overall, our method achieves the best prediction accuracy in almost all conditions for a range of $K$ values.

\subsection{Protein-Protein Interaction Network}
We further use the protein-protein interaction (PPI) network for Plasmodium in the BioGrid database (release 3.3.124) to evaluate the performance of the five methods. This network has $1227$ nodes and $2445$ experimentally-validated true edges. We sample $90\%$ and $70\%$ of the true edges as the input of all the tested methods, respectively. As shown in Figure \ref{BioGridPPI}, our method Tri+Degree outperforms the others in both settings. This shows that, in addition to gene co-expression networks, our method also works well on PPI networks. \par

\subsection{Multiple Releases of Protein-Protein Interaction Networks}
In the above two experiments, the training and test edges are sampled from the networks to be inferred. Here we test our method with another strategy. In particular, we make use of three releases $3.1.94$, $3.2.104$ and $3.2.114$ of the protein-protein interaction (PPI) network for the Rat species, which are taken from the BioGrid database. The three releases have $1568$, $2651$ and $2810$ nodes, respectively, and  $1813$, $3499$ and $3769$ known edges, respectively. A newer release typically contains all the edges in an older release.
In this experiment, we predict PPIs from an older release and then check if the predicted PPIs appear in a newer release or not. A prediction method is better if it yields a larger number of predicted PPIs in a newer release.\par
Table \ref{tbl:multiReleases} shows that our method outperforms the other four methods significantly. Each entry in the table contains the test results of $5$ methods on two different releases. For example, the entry at the third row and the third column contains the results obtained by using release $3.2.104$ as input and the difference between releases $3.3.114$ and $3.2.104$ as the test set. In each entry, from left to right the numbers are the ratios of the edges only in a newer release being recovered by PU, Tri, Tri+Scale, Tri+L1 and our method, respectively. % The results of our method is in bold in each entry. 
Each method predicts $0.05\times \frac{p(p-1)}{2}$ new edges where $p$ is the number of proteins in the older release. In fact, when predicting more edges, our method has a even larger advantage over the others.

\section{Conclusion}\label{sec:conclusion}
This paper presents a novel method for network inference (link prediction) using a node-specific degree prior learned from a subset of observed edges. We show theoretically that the proposed degree prior is effective in inducing a network that approximately follows the observed degree distribution. We propose to use a matrix completion objective regularized by our degree prior for network inference, and provide the recovery guarantee of our structured matrix completion method under the uniform sampling assumption. Our experimental results show that our method achieves better performance than existing approaches on both simulated gene-expression networks and real protein-protein interaction networks. \newline

Our analysis have mostly assumed that the observed edges are uniformly sampled from the true network, which is typically the case in many real applications. % as long as the number of observed edges is large. % We have also provided the recovery error bound of our structured matrix completion method under the uniform sampling assumption. 
It turns out that our approach also performs better than existing methods when the adjacent edges of hubs are somewhat upsampled/downsampled, % by $5\%$ , 
so our method is not very sensitive to mismatch in sampling conditions. However, it is interesting and useful to study the performance of different methods under more challenging conditions. % We will conduct further study of the robustness of our method, and test it under even a larger sampling bias. \newline

\onecolumn

\section{Proof of Theorem~2.2}

Mathematically, the network inference problem addressed in our paper is related to PU (Positive-Unlabeled) learning for matrix completion \cite{hsieh2014pu}. Thus, we start from the same problem setting as \cite{hsieh2014pu} and borrows some similar techniques in proving our bound. We theoretically show that our degree prior does improve the network inference as it guarantees a better bound of recovery error. \par 
\paragraph{Problem setting} We assume that a 0/1 matrix $\E$ is observed from a real-valued matrix $\X^*$ by a thresholding process: Let $q\in \bbR$ be the threshold value, then $\Eij= thr(\X^*_{ij}):=\ind{ \X^*_{ij} > q }$ where $\ind{\cdot}$ is the indicator function. Furthermore, we assume that a subset of the edge set $E=\{(i,j): \Eij=1\}$ is observed by uniform sampling elements from $E$ with probability $\rho\in (0,1)$. We use the matrix $\bOmega$ to denote the observations, i.e., $\Oij=1$ if the edge $(i,j)$ is observed and $\Oij=0$ otherwise. This data generation process indicates
\begin{eqnarray*}
\condP{\Oij=1}{\Eij=1}=\rho, &\condP{\Oij=0}{\Eij=1}=1-\rho, \\
\condP{\Oij=1}{\Eij=0}=0, &\condP{\Oij=0}{\Eij=0}=1.
\end{eqnarray*}

Consider the following objective as the proxy for recovering $\E$:
\begin{gather} \label{e:objapp}
 \min\limits_{\X} \ R_{l}(\X;\bOmega) := \left( 1 - \frac{\rho}{2} \right) \sum_{(i,j):\Oij=1}\; \left(\Xij-1 \right)^2  + \frac{\rho}{2} \sum_{(i,j):\Oij=0} \left(\Xij-0\right)^2 \\
\text{s.t.}\quad \X \in \calX:=\{ \X\in \bbR^{p\times p} \;\mid\; \X=\X^\top, \quad 0\le \X \le 1, \quad \norm{\X}_*\le t, \quad S_{H}(\X,d,\alpha)\le r \}, \nonumber
\end{gather}
where $0\le \X \le 1$ indicates elementwise comparison, i.e.,  $0\le \Xij \le 1$, $\forall 1\le i,j \le p$. This objective assigns different weights to the squared error between $\Xij$ and $\Oij$ depending on the observed value $\Oij$. When the percentage of unobserved edges is high ($\rho$ is small), the objective emphasizes more on predicting the observed $1$'s and allows for larger predictions for unobserved entries; otherwise it tends to predict smaller values (close to $0$) for the unobserved entries.

We assume the underlying real-valued matrix $\X^*$ comes from the class $\calX$. Once we obtain the minimizer $\hat{\X}$ of \eqref{e:objapp}, we recover a binary matrix $\bar{\X}$ using $\hat{\X}$ by thresholding, i.e., $\bar{\X}_{ij}=\ind{\hat{\X}_{ij}>q}$.

We define the \emph{expected recovery error} of an estimator $\X(\bOmega)$ based on observation $\bOmega$ as
\begin{align*}
R(\X)=\ExpO{ \sum_{i,j} \ind{ thr(\Xij)\neq \Eij } },
\end{align*}
where the expectation is taken over random selection of observed edges (i.e., the randomness in observing $\bOmega$ from $\E$).

In order to relate our objective to the recovery error, we will make use of the following \emph{label-dependent error}
\begin{align*}
R_{\rho} (\Xij,\Oij) & = \left( 1-\frac{\rho}{2}\right) \ind{thr(\Xij)=0} \ind{\Oij=1}\\
& + \frac{\rho}{2} \ind{thr(\Xij)=1} \ind{\Oij=0}, \\
R_{\rho} (\X;\bOmega) & = \sum_{i,j} R_\rho (\Xij,\Oij).
\end{align*}
The \emph{expected label-dependent error} of an estimator $\X(\bOmega)$ is defined as
\begin{align}
R_{\rho} (\X) : = \ExpO{R_{\rho} (\X;\bOmega)}=\ExpO{\sum_{i,j} R_\rho (\Xij,\Oij) }.
\end{align}

The following lemma shows that the expected label-dependent error is proportional to the expected recovery error.
\begin{lemma}\label{lem:recovery-label}
Let $\X(\bOmega)$ be an estimator based on observations $\bOmega$. Then there exists some constant $b$ independent of $\X$, such that
\begin{align}
R_{\rho} (\X) = \frac{\rho}{2} R (\X) + b.
\end{align}
\end{lemma}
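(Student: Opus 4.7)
The plan is to verify the identity entry by entry, then sum. For each $(i,j)$ and each fixed $\X\in\calX$, I would compute the per-entry quantity $R_\rho(\X_{ij},\Oij)-\tfrac{\rho}{2}\ind{thr(\X_{ij})\neq \Eij}$ and show its expectation over $\bOmega$ takes a value that depends only on $\Eij$, not on $\X$. Summing over $(i,j)$ then isolates a constant $b$.

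I would split into the two cases on the underlying label $\Eij$. When $\Eij=0$, the sampling model forces $\Oij=0$ deterministically, so the first summand in $R_\rho$ vanishes, and the second summand equals $(\rho/2)\ind{thr(\X_{ij})=1}=(\rho/2)\ind{thr(\X_{ij})\neq \Eij}$; this case contributes nothing to $b$. When $\Eij=1$, $\Oij$ is Bernoulli$(\rho)$ and $\ind{thr(\X_{ij})\neq\Eij}=\ind{thr(\X_{ij})=0}$. Writing $m=\ind{thr(\X_{ij})=0}$, a short algebraic rearrangement of
\[
R_\rho(\X_{ij},\Oij)=(1-\tfrac{\rho}{2})\,m\,\Oij+(\tfrac{\rho}{2})(1-m)(1-\Oij)
\]
gives the clean pointwise identity
\[
R_\rho(\X_{ij},\Oij)-\tfrac{\rho}{2}\,m \;=\; m(\Oij-\rho) + \tfrac{\rho}{2}(1-\Oij).
\]
Taking expectation (with $\X$ viewed as fixed so that $m$ is deterministic), the first summand vanishes since $\bbE[\Oij-\rho \mid \Eij=1]=0$, and the second yields $\tfrac{\rho}{2}(1-\rho)$, which is independent of $\X$.

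Summing over $(i,j)$, only entries with $\Eij=1$ contribute, yielding
\[
b \;=\; \tfrac{\rho}{2}(1-\rho)\sum_{i,j}\ind{\Eij=1} \;=\; \rho(1-\rho)\,|E|,
\]
where the last step uses the symmetry of $\E$ and the convention $\E_{ii}=0$. This establishes the claim with an explicit constant depending only on $\rho$ and the true edge count.

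The main subtlety, rather than a technical obstacle, is that the vanishing of $\bbE[m(\Oij-\rho)]$ in the $\Eij=1$ case uses the fact that $m$ does not depend on $\Oij$. The lemma is therefore applied to a fixed $\X\in\calX$, and when it is combined with the Rademacher-complexity machinery mentioned after the statement, the dependence of the estimator $\hat\X(\bOmega)$ on the observations is handled by a uniform deviation bound over the class $\calX$ rather than by this lemma itself.
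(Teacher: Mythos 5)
Your proof is correct, and it reaches the identity by a genuinely different decomposition than the paper's. The paper conditions on the \emph{predicted} label: it introduces $\eta(\tXij)=\condP{\Eij=1}{\tXij}$ and $\xi(\tXij)=\condP{\Oij=1}{\tXij}$, derives $\xi=\rho\,\eta$ from the sampling model, and rewrites $R_\rho(\X)$ as an expectation over the thresholded matrix $\tX$, leaving the constant in the implicit form $b=\ExpX{\sum_{ij}\tfrac{\rho-\rho^2}{2}\eta(\tXij)}$. You instead condition on the \emph{true} label $\Eij$ and compute the per-entry expectation directly; this is more elementary, avoids the Bayes-style inversion, and yields the explicit value $b=\rho(1-\rho)\abs{E}$, which is exactly what the paper's implicit expression evaluates to since $\bbE_{\tX}\left[\eta(\tXij)\right]=\prob{\Eij=1}=\ind{\Eij=1}$. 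Your closing caveat is well taken and applies equally to the paper's argument: the paper's factorization $\condP{\Oij=1,\Eij=1}{\tXij}=\condP{\Eij=1}{\tXij}\condP{\Oij=1}{\Eij=1}$ silently assumes $\Oij$ is conditionally independent of $\tXij$ given $\Eij$, which is the same requirement as the vanishing of your cross term $\bbE\left[m(\Oij-\rho)\right]$, namely that $thr(\Xij)$ not depend on $\Oij$. So despite the lemma being stated for an estimator $\X(\bOmega)$, the identity is really a statement about a fixed $\X\in\calX$, with the data-dependence of $\hat\X(\bOmega)$ properly handled only by the uniform deviation bound in the proof of Theorem 2.2 — your version makes this limitation explicit where the paper's glosses over it.
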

\begin{proof}
For notational symplicity, we denote by $\tXij=thr(\Xij)\in \{0,1\}$ the thresholded binary value from $\Xij$, and by $\tX$ the entire thresholded 0/1 matrix.

Define $\eta(\tXij)=\condP{\Eij=1}{\tXij}$ and $\xi(\tXij)=\condP{\Oij=1}{\tXij}$. According to the data generation process, we have
\begin{align}
\xi(\tXij) & = \condP{\Oij=1, \Eij=1}{\tXij} +
\condP{\Oij=1, \Eij=0}{\tXij} \nonumber \\
& = \condP{\Eij=1}{\tXij} \condP{\Oij=1}{\Eij=1} +
\condP{\Eij=0}{\tXij} \condP{\Oij=1}{\Eij=0} \nonumber \\
& = \rho\cdot \eta(\tXij). \label{e:Pthres}
\end{align}

Consider now the two cases in which $\tXij$ makes an error in predicting $\Oij$.

\noindent\textbf{(i)} When $\tXij=0$ and $\Oij=1$, according to \eqref{e:Pthres} we have
\begin{align*}
\condP{\Oij=1}{\tXij=0} = \xi(0) = \rho\cdot \eta(0).
\end{align*}

\noindent\textbf{(ii)} When $\tXij=1$ and $\Oij=0$, according to \eqref{e:Pthres} we have
\begin{align*}
\condP{\Oij=0}{\tXij=1} = 1-\xi(1) = 1-\rho\cdot \eta(1).
\end{align*}

Since $\tX$ is also uniquely determined by the estimator for fixed $\bOmega$, we can instead compute $R_{\rho}(\X)$ by taking expectation over the thresholded matrix $\tX$ as follows
\begin{align*}
R_{\rho}(\X)&  = \ExpX{ \sum_{ij} \left(1-\frac{\rho}{2}\right) \condP{\Oij=1}{\tXij=0} \ind{\tXij=0} +  \frac{\rho}{2} \condP{\Oij=0}{\tXij=1} \ind{\tXij=1} }  \\
 & = \ExpX{ \sum_{ij} \left(1-\frac{\rho}{2}\right) \xi(0) \ind{\tXij=0} +  \frac{\rho}{2} \left(1-\xi(1)\right) \ind{\tXij=1} }  \\
& = \ExpX{ \sum_{ij} \left(\rho-\frac{\rho^2}{2}\right) \eta(0) \ind{\tXij=0} + \frac{\rho}{2} \left(1- \rho\cdot \eta(1)\right) \ind{\tXij=1} } \\
& = \bbE_{\tX} \left[ \sum_{ij} \frac{\rho}{2} \eta(0) \ind{\tXij=0} + \frac{\rho-\rho^2}{2} \eta(0) \ind{\tXij=0} \right. \\
& \qquad\; \left. + \frac{\rho}{2} \left(1-\eta(1)\right) \ind{\tXij=1} + \frac{\rho-\rho^2}{2} \eta(1) \ind{\tXij=1} \right] \\
& = \ExpX{ \sum_{ij} \frac{\rho}{2} \eta(0) \ind{\tXij=0} + \frac{\rho}{2} \left(1-\eta(1)\right) \ind{\tXij=1} + \frac{\rho-\rho^2}{2} \eta(\tXij) } \\
& = \ExpX{ \sum_{ij} \frac{\rho}{2} \condP{\Eij=1}{\tXij=0} \ind{\tXij=0} + \frac{\rho}{2}\condP{\Eij=0}{\tXij=1} \ind{\tXij=1} + \frac{\rho-\rho^2}{2} \eta(\tXij) } \\
& = \frac{\rho}{2} R(\X) + \ExpX{ \frac{\rho-\rho^2}{2} \eta(\tXij) }
\end{align*}
We conclude the proof by setting $b=\ExpX{ \frac{\rho-\rho^2}{2} \eta(\tXij) }$ which is independent of $\X$ due to the expectation.
\end{proof}

We now connect the label-dependent error to the weighted quadratic objective.
\begin{lemma} \label{lem:label-loss}
For any $q$, we have
\begin{align} \label{e:loss-and-error-weak}
 R_\rho(\X) - \min_{\X \in \calX}\ R_\rho(\X) \le \gamma' R_{l}(\X)
\end{align}
where $\gamma'=\max\left(\frac{1}{q^2},\frac{1}{(2-\rho)(1-q)^2}\right)$.
\end{lemma}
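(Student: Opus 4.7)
The plan is to establish the inequality pointwise, in expectation over $\bOmega$, and then sum over all entries $(i,j)$. The key technical move is to condition on the latent edge indicator $\Eij$, exploiting the fact that $\Oij$ is deterministic given $\Eij=0$ and Bernoulli with parameter $\rho$ given $\Eij=1$, together with the box constraint $0\le \Xij\le 1$ inherited from $\X\in\calX$.

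First I will rewrite both $R_\rho(\X)$ and $R_l(\X)$ as sums of pointwise expected terms by applying $\condP{\Oij=1}{\Eij=1}=\rho$ and $\condP{\Oij=1}{\Eij=0}=0$. This gives, per entry, two regimes. When $\Eij=1$, the pointwise expected label error is $(1-\rho/2)\rho\,\ind{\Xij\le q}+(\rho/2)(1-\rho)\,\ind{\Xij>q}$ and the pointwise expected quadratic loss is $\tfrac{\rho}{2}\bigl[(2-\rho)(\Xij-1)^2+(1-\rho)\Xij^2\bigr]$; when $\Eij=0$, these become $\tfrac{\rho}{2}\ind{\Xij>q}$ and $\tfrac{\rho}{2}\Xij^2$, respectively.

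Next I will identify $\min_{\X'\in\calX}R_\rho(\X')$ pointwise: for $\Eij=1$ any $\Xij>q$ attains per-entry minimum $\tfrac{\rho}{2}(1-\rho)$; for $\Eij=0$ any $\Xij\le q$ attains $0$. Subtracting these minima shows the pointwise excess label error equals exactly $\tfrac{\rho}{2}$ whenever $thr(\Xij)\ne \Eij$ and vanishes otherwise, consistent with Lemma~\ref{lem:recovery-label}, and reduces the task to bounding $\tfrac{\rho}{2}\sum_{ij}\ind{thr(\Xij)\ne\Eij}$ by $\gamma' R_l(\X)$. For each error case I then lower-bound the pointwise expected loss. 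If $\Eij=0$ and $\Xij>q$, then $\tfrac{\rho}{2}\Xij^2\ge\tfrac{\rho}{2}q^2$, giving ratio $1/q^2$. If $\Eij=1$ and $\Xij\le q$, the quadratic $(2-\rho)(\Xij-1)^2+(1-\rho)\Xij^2$ is convex with unconstrained minimizer $\Xij^\star=(2-\rho)/(3-2\rho)$; under the implicit condition $q<\Xij^\star$ (required for a finite ratio) it is decreasing on $[0,\Xij^\star]$, so on $[0,q]$ its minimum is attained at $\Xij=q$ with value $(2-\rho)(1-q)^2+(1-\rho)q^2\ge(2-\rho)(1-q)^2$, giving ratio at most $1/[(2-\rho)(1-q)^2]$. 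Taking the maximum yields $\gamma'=\max(1/q^2,\,1/[(2-\rho)(1-q)^2])$, and summing the pointwise inequalities over $(i,j)$ finishes the proof.

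The main obstacle is obtaining the tighter $(2-\rho)$ factor in the denominator rather than merely $1$. A purely sample-level pointwise argument, working with $R_\rho(\Xij,\Oij)$ and $R_l(\Xij,\Oij)$ for a fixed $\bOmega$, only yields $\max(1/q^2,\,1/(1-q)^2)$; the improvement requires the expected pointwise analysis, in which the expected quadratic loss for a true edge retains the extra $(1-\rho)\Xij^2$ term alongside $(2-\rho)(\Xij-1)^2$, and only the nonnegative $(1-\rho)q^2$ remainder is discarded, producing the denominator factor $(2-\rho)$.
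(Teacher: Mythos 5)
Your proof follows essentially the same route as the paper's: decompose the errors entrywise, condition on $\Eij$ (using $\condP{\Oij=1}{\Eij=1}=\rho$ and $\Oij=0$ when $\Eij=0$), compute the expected label-dependent error and expected quadratic loss per entry, subtract the pointwise minima so that the excess is $\rho/2$ exactly when $thr(\Xij)\neq\Eij$, and bound the ratio in the two error cases to get the same constants $1/q^2$ and $1/\bigl((2-\rho)(1-q)^2\bigr)$. The one harmless deviation is your detour through the unconstrained minimizer $\Xij^\star=(2-\rho)/(3-2\rho)$: the restriction $q<\Xij^\star$ is not needed for this weak inequality (and is not ``required for a finite ratio''), since for any $\Xij\le q\le 1$ one has $(\Xij-1)^2\ge(1-q)^2$ directly and the $(1-\rho)\Xij^2$ term can simply be dropped, which is how the paper argues and which preserves the lemma's ``for any $q$'' claim.
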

Furthermore, if $q < \frac{2-\rho}{3-2\rho}$, we have
\begin{align} \label{e:loss-and-error}
 R_\rho(\X) - \min_{\X \in \calX}\ R_\rho(\X) \le \gamma \left( R_{l}(\X) - \min_{\X}\ R_{l}(\X) \right)
\end{align}
where $\gamma=\max\left(\frac{1}{q^2}, \frac{1}{(3-2\rho)\left(q - \frac{2-\rho}{3-2\rho} \right)^2} \right)$.
\begin{proof}
First, we define $R_{l}(\X)$ as
\begin{align}
& R_{l} (\X) : = \ExpO{R_{l} (\X;\bOmega)}=\ExpO{\sum_{i,j} R_l (\Xij,\Oij) } \nonumber
\end{align}

Let us first prove \eqref{e:loss-and-error-weak}. Consider the following two cases.

\noindent \textbf{(i)} If $\Eij=0$, then $\Oij=0$ and the two types of losses incurred at entry $(i,j)$ are
\begin{align*}
R_{\rho}(\Xij)&= \frac{\rho}{2} \ind{\Xij > q},\qquad \min_{\Xij}\; R_{\rho}(\Xij)=0 \\
R_{l}(\Xij)&=  \frac{\rho}{2} \Xij^2,\;\quad\qquad\qquad \min_{\Xij}\; R_{l}(\Xij)=0.
\end{align*}
If $\Xij\le q$, then $LHS=0$ and $RHS\ge 0$, so \eqref{e:loss-and-error-weak} holds at entry $(i,j)$ trivially; otherwise $\Xij > q$, then $LHS=\frac{\rho}{2}$, and $RHS\ge \gamma' \frac{\rho q^2}{2}$, so \eqref{e:loss-and-error-weak} holds at entry $(i,j)$ with $\gamma'=\frac{1}{q^2}$.

\noindent \textbf{(ii)} If $\Eij=1$, then $\condP{\Oij=1}{\Eij=1}=\rho$ and the two types of losses incurred at entry $(i,j)$ are
\begin{align*}
R_{\rho}(\Xij)&=\frac{\rho(2-\rho)}{2}\ind{\Xij\le q}+\frac{\rho(1-\rho)}{2}\ind{\Xij > q},\quad \min_{\Xij}\; R_{\rho}(\Xij)=\frac{\rho(1-\rho)}{2},  \\
R_{l}(\Xij)&=\frac{\rho(2-\rho)}{2} (\Xij-1)^2 + \frac{\rho(1-\rho)}{2} \Xij^2.
\end{align*}
If $\Xij>q$, we have $LHS=0$ and \eqref{e:loss-and-error-weak}  holds at entry $(i,j)$ trivially; otherwise $\Xij \le q$ and $LHS=\frac{\rho}{2}$, while
\begin{align*}
RHS\ge \gamma' \frac{\rho(2-\rho)}{2} (1-q)^2,
\end{align*}
so \eqref{e:loss-and-error-weak} holds at entry $(i,j)$ with $\gamma'=\frac{1}{(2-\rho)(1-q)^2}$.
Summing over all $(i,j)$ concludes the proof for \eqref{e:loss-and-error-weak}.

To prove \eqref{e:loss-and-error} we only need to consider $\min_{\Xij}\; R_{l}(\Xij)$
for $\Xij\le q$, as the cases \textbf{(i)} and case \textbf{(ii)} with $\Xij>q$ continue to hold. By taking the derivative of $R_{l}(\Xij)$ and setting it to zero, we observe that the minimum of $\min_{\Xij\in\bbR}\; R_{l}(\Xij)$ is achieved at $\frac{2-\rho}{3-2\rho}\in \left( \frac{2}{3}, 1 \right)$, with $\min_{\Xij}\ R_{l}(\Xij)=\frac{\rho(1-\rho)(2-\rho)}{2(3-2\rho)}$. But since we require $q<\frac{2-\rho}{3-2\rho}$, for $\Xij\le q$ we have
\begin{align*}
R_{l}(\Xij) - \min_{\Xij}\ R_{l}(\Xij) \ge R_{l}(q)-R_l\left(\frac{2-\rho}{3-2\rho}\right) = \frac{\rho (3-2\rho)}{2} \left(q - \frac{2-\rho}{3-2\rho} \right)^2 > 0.
\end{align*}
It is then clear that we can ensure \eqref{e:loss-and-error} in this case by setting $\eta=\frac{1}{(3-2\rho)\left(q - \frac{2-\rho}{3-2\rho} \right)^2}$. Combining all cases gives the desired inequality.
\end{proof}

\textbf{Theorem 2.2} Assume the underlying graph has a degree distribution $\mathbf{d}^* =(d_1^*,\dots,d_p^*)$ with a total degree $s=2 \abs{E}=\sum_{i=1}^p d_p^*$ and maximum degree $d_{\max}^*=\max\ (d_1^*,\dots,d_p^*)$, and  assume the threshold value used to obtain $\E$ from the underlying real matrix $\X^*$ is $q < \frac{2-\rho}{3-2\rho}$. Let $\hat{\X}(\bOmega)$ be the minimizer of the weighted quadratic objective \eqref{e:objapp}, where the regularizer $S_H(\X,\mathbf{d},\alpha)$ uses an estimated degree distribution $\mathbf{d}=(d_1,\dots,d_p)$ with maximum degree $d_{\max}=\max\ (d_1,\dots,d_p)$. Then with probability at least $1-\delta$, we have
\begin{align*}
R(\hat{\X}) \le \frac{4\gamma (2-\rho)}{\rho}  \left(
2 \min\left( t C ( 2 \sqrt{d_{\max}^*} + \sqrt[4]{s}),\ r \log_2^\alpha (d_{\max}+1)  \right) + \sqrt{\frac{s \log 2/\delta}{2}}
\right)
\end{align*}
where $\gamma=\max\left(\frac{1}{q^2}, \frac{1}{(3-2\rho)\left(q - \frac{2-\rho}{3-2\rho} \right)^2} \right)$ and $C$ is a universal constant.

\begin{proof}
In view of Lemma~\ref{lem:recovery-label}, we observe that
\begin{align*}
R(\hat{\X}) - \min_{\X} R(\X) = \frac{2}{\rho} \left( R_{\rho} (\hat{\X}) - \min_{\X} R_{\rho} (\X) \right).
\end{align*}
Notice $\min_{\X} R(\X)=R(\X^*)=0$ according to the data genearation process. Now apply Lemma~\ref{lem:label-loss} and we obtain
\begin{align*}
R(\hat{\X}) \le \frac{2\gamma}{\rho} \left( R_{l}(\hat{\X}) - \min_{\X}\ R_{l}(\X) \right).
\end{align*}
Notice $R_{l}(\hat{\X}) - \min_{\X}\ R_{l}(\X)$ is the sub-optimality of the empirical risk minimizer (for the weighted quadratic loss). The rest of this proof follows the standard procedure for proving generalization error using Rademacher complexity \cite{Latala05a,gnecco2008approximation,bartlett2003rademacher}.

We will show that, with probability at least $1-\delta$:
\begin{align}\label{e:ULLN}
\forall \X\in \calX: \qquad \abs{ R_{l}(\X) - R_{l}(\X;\bOmega) } \le \epsilon/2,
\end{align}
where
\begin{align*}
\epsilon:= 2 (2-\rho)  \left(
2 \min\left( t C ( 2 \sqrt{d_{\max}^*} + \sqrt[4]{s}),\ r \log_2^\alpha (d_{\max}+1)  \right) + \sqrt{\frac{s \log 2/\delta}{2}}
\right).
\end{align*}
Denote $\bar{\X}=\min_{\X}\ R_{l}(\X)$. Notice \eqref{e:ULLN} implies that, with probability at least $1-\delta$, the following inequalities hold simultaneously:
\begin{align*}
R_{l}(\hat{\X}) & \le R_{l}(\hat{\X};\bOmega) + \epsilon/2, \\
R_{l}(\bar{\X};\bOmega) & \le R_{l}(\bar{\X}) + \epsilon/2.
\end{align*}
Realizing that $R_{l}(\hat{\X};\bOmega) \le R_{l}(\bar{\X};\bOmega)$ as $\hat{\X}$ is the empirical risk minimizer, the above two inequality guarantee that, with probability at least $1-\delta$, we have
\begin{align*}
R_{l}(\hat{\X}) & \le \min_{\X}\ R_{l}(\X) + \epsilon,
\end{align*}
which gives the desired generalization error bound.

We now prove \eqref{e:ULLN} and it suffice to show one side of the inequality, i.e., with probability at least $1-\delta/2$:
\begin{align}
\sup_{\X\in\calX} \left\{ R_{l}(\X) - R_{l}(\X;\bOmega) \right\} \le \epsilon/2.
\end{align}
as the proof of the other side follows the same procedure and we obtain \eqref{e:ULLN} using union bound.
First observe that $R_l (\Xij)$ can be either $\left( 1 - \frac{\rho}{2} \right) \left(\Xij-1 \right)^2$ (if $\Oij=1$) or $\frac{\rho}{2} \Xij^2$ (if $\Oij=0$). When changing the random variable $\Oij$, the change in $\sup_{\X\in\calX}\ \left\{ R_{l}(\Xij) - R_{l}(\Xij;\Oij) \right\}$ is at most
\begin{align*}
\abs{ \left( 1 - \frac{\rho}{2} \right) \left(\Xij-1 \right)^2 - \frac{\rho}{2} \Xij^2  } \le L \abs{1-0} = L,
\end{align*}
where $L:=\sup_{0\le \Xij\le 1}\ \abs{\frac{\partial \left( 1 - \frac{\rho}{2} \right) \left(\Xij-1 \right)^2 - \frac{\rho}{2} \Xij^2 }{\partial \Xij}} = 2-\rho$ is the upper bound of Lipschitz constant of the change in loss. Applying McDiarmid's Theorem \cite{mcdiarmid1989method,rio2013mcdiarmid} to $R_{l}(\X) - R_{l}(\X;\bOmega) =\sum_{i,j} R_{l}(\Xij) - R_{l}(\Xij;\Oij)$, we have with probability at least $1-\delta/2$,
\begin{align*}
\sup_{\X\in\calX} \left\{ R_{l}(\X) - R_{l}(\X;\bOmega) \right\} \le
\ExpO{ \sup_{\X\in\calX} \left\{ R_{l}(\X) - R_{l}(\X;\bOmega) \right\} } + (2-\rho) \sqrt{\frac{s \log 2/\delta}{2}},
\end{align*}
where we have used the fact that $\Oij$ can only change its value if $\Eij=1$ and there are $s$ such entries in $\E$.

We now finally upper bound the expectation
\begin{align}
 &\ExpO{ \sup_{\X\in\calX} \left\{ R_{l}(\X) - R_{l}(\X;\bOmega) \right\} } \nonumber \\
= & \ExpO{ \sup_{\X\in\calX} \left\{ \bbE_{\bOmega'}\left[ R_{l}(\X;\bOmega') \right] - R_{l}(\X;\bOmega) \right\} } \nonumber \\
\le &  \bbE_{\bOmega,\bOmega'} \left[  \sup_{\X\in\calX} \left\{ R_{l}(\X;\bOmega')  - R_{l}(\X;\bOmega) \right\} \right] \nonumber \\
= & \bbE_{\bOmega,\bOmega'} \left[  \sup_{\X\in\calX} \left\{ \sum_{i,j:\Eij=1} R_{l}(\Xij;\bOmega'_{ij}) - R_{l}(\Xij;\Oij) \right\} \right] \label{e:Rademacher-1} \\
= & \bbE_{\bOmega,\bOmega',\sigma} \left[  \sup_{\X\in\calX} \left\{ \sum_{i,j:\Eij=1} \sigma_{ij} \left( R_{l}(\Xij;\bOmega'_{ij}) - R_{l}(\Xij;\Oij) \right) \right\} \right] \nonumber \\
= & \bbE_{\bOmega,\bOmega',\sigma} \left[  \sup_{\X\in\calX} \left\{ \sum_{i,j:\Eij=1} \sigma_{ij} R_{l}(\Xij;\bOmega'_{ij}) \right\} + \sup_{\X\in\calX} \left\{ \sum_{i,j:\Eij=1} - \sigma_{ij} R_{l}(\Xij;\Oij) \right\} \right] \nonumber \\
= & 2 \bbE_{\bOmega,\sigma} \left[  \sup_{\X\in\calX} \left\{ \sum_{i,j:\Eij=1} \sigma_{ij} R_{l}(\Xij;\bOmega_{ij}) \right\} \right] \label{e:Rademacher-2}
\end{align}
where $\sigma_{ij}$ are random variables that take value in $\{-1,1\}$ with equal probability, and we have used the fact that $\Oij=\Oij'=0$ if $\Eij=0$ in \eqref{e:Rademacher-1}.

Note when $\Eij=1$, $R_{l}(\Xij;\bOmega_{ij}) = \left( 1 - \frac{\rho}{2} \right) \left(\Xij-1 \right)^2$ with probability $\rho$, and  $R_{l}(\Xij;\bOmega_{ij}) = \frac{\rho}{2} \Xij^2$ with probability $1-\rho$; in both cases, $R_{l}(\Xij;\bOmega_{ij})$ has a Lipschitz constant at most $2-\rho$ for $0\le \Xij\le 1$. Continuing from \eqref{e:Rademacher-2}, we have
\begin{align}
 \ExpO{ \sup_{\X\in\calX} \left\{ R_{l}(\X) - R_{l}(\X;\bOmega) \right\} }  \le  2 (2-\rho) \bbE_{\bOmega,\sigma} \left[  \sup_{\X\in\calX} \left\{ \sum_{i,j:\Eij=1} \sigma_{ij} \Xij \right\} \right].
\end{align}
We can now use the two regularizers to bound the Rademacher complexity. First, applying the duality of the (matrix) 2-norm and trace-norm, we have
\begin{align}
 \bbE_{\bOmega,\sigma} \left[  \sup_{\X\in\calX} \left\{ \sum_{i,j:\Eij=1} \sigma_{ij} \Xij \right\} \right] \le \bbE_{\sigma} \left[ \sup_{\norm{\X}_*\le t} \norm{\PP(\sigma,\E)}_2 \norm{\X}_* \right]
\le t \bbE_{\sigma} \left[  \norm{\PP(\sigma,\E)}_2 \right]
\label{e:Rademacher-3}
\end{align}
where $\PP(\sigma,\E)\in \bbR^{p\times p}$ is a matrix with $\PP_{ij}=\sigma_{ij}$ if $\Eij=1$ and $0$ otherwise. Applying the main theorem of \cite{Latala05a} to $\PP$, which is an independent zero mean random matrix, we have
\begin{align*}
\bbE\left[\norm{\PP}_2\right] & \le C \left( \max_i \sqrt{\sum_j\bbE\left[\PP_{ij}^2 \right]}
+ \max_j \sqrt{\sum_i\bbE\left[\PP_{ij}^2 \right]} + \sqrt[4]{\sum_{ij}\bbE\left[\PP_{ij}^4 \right]} \right)\\
& = C \left( 2 \sqrt{d_{\max}^*} + \sqrt[4]{s} \right)
\end{align*}
where $C$ is a universal constant. Continuing from \eqref{e:Rademacher-3}, we have
\begin{align}
\bbE_{\bOmega,\sigma} \left[  \sup_{\X\in\calX} \left\{ \sum_{i,j:\Eij=1} \sigma_{ij} \Xij \right\} \right] \le t C \left( 2 \sqrt{d_{\max}^*} + \sqrt[4]{s} \right). \label{e:Rademacher-4}
\end{align}

On the other hand, we could use the duality of the (vector) $\ell_1$-norm and  $\ell_\infty$-norm and obtain
\begin{align}
 \bbE_{\bOmega,\sigma} \left[  \sup_{\X\in\calX} \left\{ \sum_{i,j:\E_{i,[j]}=1} \sigma_{i,[j]} \X_{i,[j]} \right\} \right] & =  \bbE_{\bOmega,\sigma} \left[  \sup_{\X\in\calX} \left\{ \sum_{i,j:\E_{i,[j]}=1} \left( \frac{H_{j}^\alpha}{H_{d_i}^\alpha} \X_{i,[j]} \right)   \left( \frac{H_{d_i}^\alpha}{H_{j}^\alpha} \sigma_{i,[j]} \right) \right\} \right] \nonumber \\
& \le  r \frac{H_{d_{\max}}^\alpha}{H_{1}^\alpha} = r \log_2^\alpha (d_{\max}+1), \label{e:Rademacher-5}
\end{align}
where we have used the fact that $\frac{H_{j}^\alpha}{H_{d_i}^\alpha} \X_{i,[j]}\ge 0, \forall i,j$ and so the $\ell_1$ norm of $\left[ \frac{H_{j}^\alpha}{H_{d_i}^\alpha} \X_{i,[j]}\right]$ (viewing it as a vector of $p(p-1)$ coordinates) reduces to the sum. Since both \eqref{e:Rademacher-4} and \eqref{e:Rademacher-5} hold, we can use the smaller of them to bound the Rademacher complexity.
\end{proof}

\twocolumn

\bibliography{Network_Recovery}
\bibliographystyle{apalike}

\balancecolumns
% That's all folks!
\end{document}